\setlist[itemize]{leftmargin=*}
\newtheorem{theorem}{Theorem}
\newtheorem{lemma}{Lemma}
\definecolor{cvprblue}{rgb}{0.21,0.49,0.74}
\newcommand{\myparatight}[1]{\noindent{\bf {#1}.}~}
\newcommand{\name}{\text{MMCert}}
\title{{\name}: Provable Defense against Adversarial Attacks to Multi-modal Models}
\author{
 {Yanting Wang$^{1}$, Hongye Fu${^{2}}$\thanks{Hongye Fu performed this research when he was a remote intern.}, Wei Zou$^1$, and Jinyuan Jia$^{1}$} \\
 $^1$The Pennsylvania State University, $^2$Zhejiang University\\
 $^1$\{yanting, weizou, jinyuan\}@psu.edu, $^2$3200102866@zju.edu.cn}
\begin{document}
\maketitle
\begin{abstract}
Different from a unimodal model whose input is from a single modality, the input (called multi-modal input) of a multi-modal model is from multiple modalities such as image, 3D points, audio, text, etc. Similar to unimodal models, many existing studies show that a multi-modal model is also vulnerable to adversarial perturbation, where an attacker could add small perturbation to all modalities of a multi-modal input such that the multi-modal model makes incorrect predictions for it. Existing certified defenses are mostly designed for unimodal models, which achieve sub-optimal certified robustness guarantees when extended to multi-modal models as shown in our experimental results. In our work,  we propose {\name}, the first certified defense against adversarial attacks to a multi-modal model. We derive a lower bound on the performance of our {\name} under arbitrary adversarial attacks with bounded perturbations to both modalities (e.g., in the context of auto-driving, we bound the number of changed pixels in both RGB image and depth image). We evaluate our {\name} using two benchmark datasets: one for the multi-modal road segmentation task and the other for the multi-modal emotion recognition task. Moreover, we compare our {\name} with a state-of-the-art certified defense extended from unimodal models. Our experimental results show that our {\name} outperforms the baseline. 
\end{abstract}
\section{Introduction}
With the rapid advancement of machine learning, multi-modal models have emerged as a powerful paradigm. Differing from their unimodal counterpart whose input is from a singular modality, these multi-modal models leverage input (called \emph{multi-modal input}) from diverse modalities such as images, 3D data points, audio, and text~\cite{kazakos2019epic,prakash2021multi,chumachenko2022self,fan2020sne, wang2023vqa}. Those multi-modal models have been widely used in many security and safety critical applications such as autonomous driving~\cite{fan2020sne, li2022deepfusion, wang2021pointaugmenting,teichmann2018multinet,natan2022end} and medical imaging~\cite{guo2019deep}. 

 As shown in many existing studies~\cite{goodfellow2014explaining, nguyen2015deep, sharif2016accessorize}, unimodal models are susceptible to adversarial attacks. There is no exception for multi-modal models. In particular, many recent studies~\cite{cheng2023fusion,shen2020drift,  zhang2022towards,tian2021can, xu2018fooling, shah2019cycle} showed that multi-modal models are also vulnerable to adversarial perturbations. In particular, an attacker could simultaneously manipulate all modalities of a multi-modal input such that a multi-modal model makes incorrect predictions. For instance, in the scenario of road segmentation for auto-driving, the attacker can add small perturbations to both the RGB image (captured by a camera) and the depth image (captured by a LiDAR depth sensor) to degrade the segmentation quality. Similarly, in the scenario of video emotion recognition, the attacker can apply subtle disruptions to both visual and audio data to reduce prediction accuracy. 

Many defenses were proposed to defend against adversarial attacks, In particular, they can be categorized into \emph{empirical defenses}~\cite{shafahi2019adversarial, wong2020fast, madry2017towards, kim2019single,yang2021defending, tian2021can,wang2021certified} and \emph{certified defenses}~\cite{cohen2019certified,jia2019certified,gowal2018effectiveness,levine2019sparse,wong2018provable, chiang2020certified,ye2020safer,lecuyer2019certified,zeng2023certified,xiang2022patchcleanser}. Many existing studies~\cite{athalye2018obfuscated,carlini2017adversarial,uesato2018adversarial} showed that most empirical defenses could be broken by strong, adaptive attacks (one exception is adversarial training~\cite{madry2017towards}). Therefore, we focus on certified defense in this work. Existing certified defenses are mainly designed for unimodal models (its input is from a single modality). Our experimental results show that they achieve sub-optimal performance when extended to defend against adversarial attacks for multi-modal models. The key reason is that when the attacker adds $l_p$ bounded perturbations to all modalities, the space of perturbed multi-modal inputs cannot be simply formulated as a $l_p$ ball. In this work, we focus on $l_0$-like adversarial attacks applied to each modality (i.e., manipulate a certain number of features for each modality) due to their straightforward applicability across various modalities. The investigation of alternative forms of attacks is reserved for future research.

\myparatight{Our work} We propose {\name}, the first certified defense against adversarial attacks to multi-modal models. Suppose we have a multi-modal input $\mathbf{M}=(\mathbf{m}_1, \mathbf{m}_2, \cdots, \mathbf{m}_T)$ with $T$ modalities, where $\mathbf{m}_i$ contains a set/sequence of basic elements from the $i$-th modality. We consider a general scenario, where each element could be arbitrary. For instance, each element could be a pixel value, a 3D point, an image frame, an audio frame, etc.. Given a multi-modal input $\mathbf{M}$ and a multi-modal model $g$ (called \emph{base multi-modal model}), we first create multiple sub-sampled multi-modal inputs. In particular, each sub-sampled multi-modal input is obtained by randomly sub-sampling $k_1, k_2, \cdots, k_T$ basic elements from $\mathbf{m}_1, \mathbf{m}_2, \cdots, \mathbf{m}_T$, respectively. Then, we use the base multi-modal model $g$ to make a prediction for each sub-sampled multi-modal input. Finally, we build an ensemble multi-modal model by aggregating those predictions as the final prediction made by our ensemble multi-modal classifier for the given multi-modal input $\mathbf{M}$. 

We derive the provable robustness guarantee of our ensemble multi-modal model. In particular, we show that our ensemble multi-modal model provably makes the same prediction for a multi-modal input when the number of added (or deleted or modified) basic elements to $\mathbf{m}_1, \mathbf{m}_2, \cdots, \mathbf{m}_T$ is no larger than $r_1, r_2, \cdots, r_T$. Intuitively, there is a considerable overlap between the space of randomly sub-sampled multi-modal inputs before the attack and those sub-sampled after the attack. This suggests that the alterations in the output prediction probabilities are constrained. Following~\cite{cohen2019certified,jia2020intrinsic}, the robustness guarantee is achieved by utilizing Neyman-Pearson Lemma~\cite{neyman1933ix}.

We conduct a systematic evaluation for our {\name} on two benchmark datasets for
multi-modal road segmentation and multi-modal emotion recognition tasks, respectively. We measure the performance lower bounds of our defense under adversarial attacks, with the constraint that the number of modified (or deleted or added) basic elements to each modality is bounded. We compare our {\name} with randomized ablation~\cite{levine2019sparse}, which is a state-of-the-art certified defense for unimodal models. Our experimental results show that our {\name} significantly outperforms randomized ablation when extending it to multi-modal models.

In summary, we make the following major contributions: 
\begin{itemize}
    \item We propose {\name}, the \emph{first} certified defense against adversarial attacks to multi-modal models.
    \item We derive the provable robustness guarantees of our {\name}.
    \item We conduct a systematic evaluation for our {\name} and compare it with state-of-the-art certified defense for unimodal models. 
\end{itemize}

\section{Background and Related Work}
\label{background}
Multi-modal models~\cite{kazakos2019epic,prakash2021multi,chumachenko2022self,fan2020sne} are designed to process information across multiple types of data, such as text, images, 3D point clouds, and audio, simultaneously. Multi-modal models
have shown impressive results across a variety
of applications, such as scene understanding~\cite{kazakos2019epic}, object detection~\cite{prakash2021multi, wagner2016multispectral, hassan2020learning}, sentiment analysis~\cite{chumachenko2022self, zadeh2016mosi,zadeh2018multimodal,kumar2020gated}, visual question answering~\cite{antol2015vqa, hu2020iterative}, and semantic segmentation~\cite{fan2020sne, liang2022multimodal}. 

For simplicity, we use $\mathbf{M}=(\mathbf{m}_1, \mathbf{m}_2, \cdots, \mathbf{m}_T)$ to denote a multi-modal input with $T$ modalities, where $\mathbf{m}_i$ represents the group of basic elements (pixels, images, audio) from the $i$th ($i=1,2,\cdots, T$) modality.

\subsection{Adversarial Attacks to Multi-modal Models} Many existing studies~\cite{cheng2023fusion,shen2020drift, zhang2022towards,tian2021can, xu2018fooling, shah2019cycle} showed that multi-modal models are vulnerable to adversarial attacks~\cite{goodfellow2014explaining}. For instance, Cheng et al.~\cite{cheng2023fusion} showed that the multi-modal auto-driving system can be undermined by a single-modal
attack that only aims at the camera modality, which is considered
less expensive to compromise.
Those attacks cause severe security and safety concerns for the deployment of multi-modal models in various real-world applications such as autonomous driving~\cite{fan2020sne, li2022deepfusion, wang2021pointaugmenting,teichmann2018multinet, natan2022end}. In our work, we consider a general attack, where an attacker could arbitrarily add (or delete or modify) a certain number of basic elements to each modality. For instance, when each basic element of a modality represents a pixel, an attacker could arbitrarily manipulate (e.g., modify) some pixel values for that modality. 

\subsection{Existing Defenses} 
Defenses against adversarial attacks can be categorized into \emph{empirical defenses} and \emph{certified defenses}. Empirical defenses~\cite{shafahi2019adversarial, wong2020fast, madry2017towards, kim2019single,yang2021defending, tian2021can,wang2021certified} cannot provide formal robustness guarantees under arbitrary attacks. Multiple works~\cite{athalye2018obfuscated,carlini2017adversarial,uesato2018adversarial} have shown that they can be bypassed by more advanced attacks. Existing certified defenses~\cite{cohen2019certified,jia2019certified,levine2019sparse,chiang2020certified,ye2020safer,lecuyer2019certified,liu2021pointguard,zeng2023certified,xiang2022patchcleanser,jia2022multiguard,pei2023textguard,zhang2023pointcert,yang2023graphguard} against adversarial attacks all focus on unimodal model whose input is only from a single modality. 
Among those defenses, randomized ablation~\cite{levine2019sparse,jia2021almost} achieves state-of-the-art certified robustness guarantee when an attacker could arbitrarily modify a certain number of basic elements to the input. Our experimental results show that randomized ablation achieves sub-optimal provable robustness guarantees when extended to multi-modal models. This is because when the attacker introduces perturbations with $l_0$ bounds across all modalities, the space of possible perturbed multi-modal inputs cannot be straightforwardly formulated as a $l_0$ ball.

We note that all the previously discussed certified defenses~\cite{cohen2019certified,jia2019certified,levine2019sparse,chiang2020certified,ye2020safer,lecuyer2019certified,zeng2023certified,xiang2022patchcleanser} are model-agnostic and scalable to large models. Another family of certified defenses~\cite{wong2018provable,gowal2018effectiveness,katz2017reluplex} proposed to derive the certified robustness guarantee of an unimodal model by conducting a layer-by-layer analysis. In general, those methods cannot be applied to general models and are not scalable to large neural networks.

\section{Problem Formulation}\label{problem}

We first introduce the threat model and then formally define certified defense against adversarial attacks to classification and segmentation tasks.
\subsection{Threat Model}
We discuss the threat model from the perspective of the attacker's goals, background knowledge, and capabilities. 

\myparatight{Attacker's goals} Given a multi-modal input and a multi-modal model, an attacker aims to adversarially perturb the multi-modal input such that the multi-modal model makes incorrect predictions for the perturbed multi-modal input.

\myparatight{Attacker's background knowledge and capabilities}
As we focus on the certified defense, we assume the
attacker has full knowledge about about the multi-modal model,
including its architecture and parameters.
We consider a strong attack to multi-modal models. In particular, given a multi-modal input, an attacker could simultaneously manipulate all modalities of the input~\cite{tian2021can, zhang2022towards}.
As a result, a multi-modal makes incorrect predictions for the perturbed multi-modal input. For example, to attack an auto-driving system, the attacker can add adversarial perturbation to both the depth image (captured by a LiDAR depth sensor) and the RGB image (captured by a camera) to lower the prediction quality. 

Formally, we denote a multi-modal input as $\mathbf{M}=(\mathbf{m}_1, \mathbf{m}_2, \cdots, \mathbf{m}_T)$, where $\mathbf{m}_i$ represents a group of elements of the $i$-th modality, the attacker could arbitrarily add (or delete or modify) at most $r_i$ elements to $\mathbf{m}_i$. For instance, when $\mathbf{m}_i$ represents an image, an attacker could arbitrarily change $r_i$ pixel values.

 We use $\mathbf{M}' = (\mathbf{m}'_1, \mathbf{m}'_2, \cdots, \mathbf{m}'_T)$ to denote the adversarial input. Without loss of generality, every modality can be rewritten as a list of it's basic elements. For example, an image (e.g., RGB image) can be written as a list of pixels, and an audio can be written as a list of audio frames. Therefore, we can denote $\mathbf{m}_i$ as a composition of basic elements denoted by $[m^1_i, m^2_i,\cdots,m^{n_i}_{i}]$, where $m^j_i$ represents the $j$-th basic element in the $i$-th modality, and $n_i$ represents the total number of basic elements in the $i$-th modality. We denote the number of basic elements in each modality after the attack as $n_1',n_2',\ldots, n_T'$, respectively. For the image modality, we know the number of basic elements (pixels) is fixed. However, for some other modalities like audio, the attacker is able to change the number of basic elements (e.g., audio frames) via addition or deletion. 

Hence, we define three kinds of attacks for each modality: modification attack, addition attack, and deletion attack.
We use $\mathcal{S}(\mathbf{m}_i,r_i)$ to denote the set of all possible $\mathbf{m}'_i$ when an attacker could add (or delete or modify) at most $r_i$ basic elements in $\mathbf{m}_i$. For simplicity, we use $\mathbf{R} = (r_1,r_2,\ldots,r_T)$ to denote the added (or deleted or modified) basic elements to all modalities. Then we use $\mathcal{S}(\mathbf{M},\mathbf{R}) = \mathcal{S}(\mathbf{m}_1,r_1) \times \mathcal{S}(\mathbf{m}_2,r_2)\ldots \times \mathcal{S}(\mathbf{m}_T,r_T)$ to denote the set of all possible adversarial inputs $\mathbf{M}' = (\mathbf{m}'_1, \mathbf{m}'_2, \cdots, \mathbf{m}'_T)$.

\subsection{Certifiably Robust Multi-modal Prediction}
For classification tasks, suppose we have a multi-modal classifier $G$. Given a test sample $(\mathbf{M},y)$, where $y$ is the ground truth label, we say $G$ is \emph{certifiably stable} for $\mathbf{M}$ if the predicted label remains unchanged under attack:
\begin{align}
    &G(\mathbf{M}) = G(\mathbf{M}')
    , \forall \mathbf{M}' \in \mathcal{S}(\mathbf{M},\mathbf{R}).\end{align}
If this unchanged label is the ground-truth label of $\mathbf{M}$, i.e., $G(\mathbf{M}) = y$, then we say the classifier $G$ is  \emph{certifiably robust} for this test sample.

For segmentation tasks, without loss of generality, we assume the multi-modal model outputs the segmentation result for one of the input modalities (denoted by $\mathbf{m}_o$) with $n_o$ basic elements (e.g., pixels). Then the output contains $n_o$ labels. For example, if RGB image is one of the input modalities, the output can be a segmentation of this RGB image, which contains a label for each pixel in the RGB image. Unless otherwise mentioned, we assume that the attacker performs modification attacks on $\mathbf{m}_o$ (please refer to Appendix~\ref{appendix-segmentation} for deletion and addition attacks on $\mathbf{m}_o$).

We can think of the multi-modal segmentation model $G$ as composed of multiple classifiers denoted by $G_1,G_2,\ldots, G_{n_o}$. Each classifier $G_j$ predicts a label $G_j(\mathbf{M})$ for $m^j_o$ (the $j$-th basic element of $\mathbf{m}_o$). The ground truth $y$ also includes $n_o$ labels, denoted by $y_1,y_2,\ldots, y_{n_o}$. We use $G_j(\mathbf{M}')$ to denote the predicted label for $m^j_o$ after the attack. We say $G_{j}$ is \emph{certifiably stable} for a basic element (e.g., a pixel) $m^j_o$ if:
\begin{align}
    &G_j(\mathbf{M}) = G_j(\mathbf{M}')
    , \forall \mathbf{M}' \in \mathcal{S}(\mathbf{M}, \mathbf{R}),
\end{align}
which means the predicted label for the the $j$-th basic element of $\mathbf{m}_o$ remains unchanged under attack. If $G_j(\mathbf{M}) = y_j$, then we term $G_{j}$ as \emph{certifiably robust} for $m^j_o$. 

By deriving a lower bound on the number of basic elements whose predictions are certifiably robust, we can guarantee the segmentation quality for a test sample, measured via metrics such as Certified Pixel Accuracy, Certified F-score, or Certified IoU.

\section{Our Design}
\subsection{Independent Sub-sampling}
In this section, we will first outline a universal sub-sampling method~\cite{levine2019sparse,jia2021almost,liu2021pointguard}, and then demonstrate its application across various multi-modal tasks.

\myparatight{Sub-sampling Strategy} We repeatedly randomly sub-sample $k_i$ basic elements (e.g., pixels) from the $i$-th modality $\mathbf{m}_i = [m^1_i,m^2_i, \cdots,m^{n_i}_i]$ without replacement. For simplicity, we use $\mathcal{Z}=(\mathbf{z}_1,\mathbf{z}_2, \ldots, \mathbf{z}_T)$ to denote the randomly sampled multi-modal input. 
Thus, we have $|\mathbf{z}_i| = k_i$ for all $i = 1,2,\ldots, T$. This sampling strategy exhibits versatility by being applicable across various modalities and tasks. It can be applied for classification tasks, e.g., emotion recognition. And it can also be
employed for segmentation tasks, e.g., road segmentation. Figure~\ref{fig-subsampling} in Appendix provides a visualization of this sub-sampling method.

Next, we first apply this sampling strategy to build an ensemble classifier for classification tasks.
\subsection{Certify Multi-modal Classification}

\myparatight{Ensemble Classifier} Given a testing input $\mathbf{M} = (\mathbf{m}_1,\mathbf{m}_2, \ldots, \mathbf{m}_T)$, we use $\mathcal{Z}= (\mathbf{z}_1,\mathbf{z}_2, \ldots, \mathbf{z}_T)$ to denote the randomly sub-sampled multi-modal input. We denote the multi-modal model by $g$. For simplicity, we use $g(\mathcal{Z})$ and $y$ to denote the predicted label and the true label. As $\mathcal{Z}$ is randomly sub-sampled, $g(\mathcal{Z})$ is also random. Given an arbitrary label $l \in \{1,2,\cdots, C\}$  ($C$ is the total number of classes), we use $p_l$ to denote the probability that the predicted label $g(\mathcal{Z})$ is $l$. Formally, we have $p_l = \text{Pr}(l = g(\mathcal{Z}))$. We call $p_l$ \emph{label probability}. In practice, it is computationally expensive to calculate the exact label probabilities. Following~\cite{cohen2019certified,jia2019certified,levine2019sparse}, we use Monte Carlo sampling to estimate a lower 
 bound or upper bound of $p_l$, denoted as $\underline{p_l}$ and $\overline{p_l}$ respectively. This is achieved by randomly sample $N$ ablated inputs from the distribution $\mathcal{Z}$, represented as $\mathbf{Z}_1,\mathbf{Z}_2,\cdots, \mathbf{Z}_N$, and then count the label frequency $N_l = \sum_{i=1}^{N}\mathbb{I}(g(\mathbf{Z}_i) = l)$ for each label $l$. Our ensemble classifier $G$ then predicts the label with the largest frequency $N_l$. For simplicity, we denote this label by $A$ and use $\underline{p_A}$ to represent $A$'s label probability lower bound. We define the runner-up label $B$ as the label with the second highest label frequency, i.e., $B=\text{argmax}_{l\neq A}{N_l}$. We present our certification result below:
 
\begin{theorem}[Certification for classification]
\label{theorem_of_certified_radius_classification}
Suppose we have a multi-modal test input $\mathbf{M}$ and a base multi-modal classifier $g$. Our ensemble classifier $G$ is as defined as above. We denote $A = G(\mathbf{M})$ and use $\underline{p_A}$ to denote the label probability lower bound for the label $A$. We use $B$ to denote the runner-up class and use $\overline{p_B}$ to denote the label probability upper bound for the label $B$. We define $\delta_{l}=\underline{p_A} - \frac{\lfloor\underline{p_A}\prod_{i=1}^T{n_i \choose k_i}\rfloor}{\prod_{i=1}^T{n_i \choose k_i}} $ and $\delta_{u}=\frac{\lceil\overline{p_B}\prod_{i=1}^T{n_i \choose k_i}\rceil}{\prod_{i=1}^T{n_i \choose k_i}} -\overline{p_B} $. 
Given a perturbation size $\mathbf{R} = (r_1,r_2,\ldots, r_T)$, we have the following:
\begin{align}
    &G(\mathbf{M}) = G(\mathbf{M}'),
    \forall \mathbf{M}' \in \mathcal{S}(\mathbf{M}, \mathbf{R})
\end{align}
if:
\begin{align}
&\frac{\prod_{i=1}^T{n_i \choose k_i}}{\prod_{i=1}^T{n'_i \choose k_i}}(\underline{p_A}-\delta_{l}-1+ \frac{\prod_{i=1}^T{e_i \choose k_i}}{\prod_{i=1}^T{n_i \choose k_i}}) \\
\geq& 
 \frac{\prod_{i=1}^T{n_i \choose k_i}}{\prod_{i=1}^{T}{n'_i \choose k_i}}(\overline{p_B}+\delta_{u})+1- \frac{\prod_{i=1}^T{e_i \choose k_i}}{\prod_{i=1}^T{n'_i \choose k_i}}
\end{align}
where $e_i=n_i-r_i$ and $n'_i=n_i$ for modification attack; $e_i=n_i$ and $n'_i=n_i+r_i$ for addition attack; $e_i=n_i-r_i$ and $n'_i=n_i-r_i$ for deletion attack, where $i = 1,2, \ldots, T$ is the modality index.
\end{theorem}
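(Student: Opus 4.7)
The plan is to reduce Theorem~\ref{theorem_of_certified_radius_classification} to a Neyman--Pearson style counting argument over the product sub-sample distributions induced by $\mathbf{M}$ and $\mathbf{M}'$, and then tighten the resulting continuous bound using the fact that all label probabilities live on a common discrete grid. The structural intuition is already given in the introduction: the two sub-sampling distributions overlap on a large "safe" set of sub-samples whose realised values are identical under $\mathbf{M}$ and $\mathbf{M}'$, and the size of this overlap controls how much a single class probability can move between $\mathbf{M}$ and $\mathbf{M}'$.

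First I would make the sub-sample spaces explicit. Let $\Omega(\mathbf{M})$ denote the set of all independent per-modality sub-samples of $\mathbf{M}$, so that $|\Omega(\mathbf{M})|=\prod_{i=1}^T \binom{n_i}{k_i}$, and similarly $|\Omega(\mathbf{M}')|=\prod_{i=1}^T \binom{n'_i}{k_i}$ with $n'_i$ as in the three attack cases. I would then identify a \emph{safe} subset $\Omega_s$ whose realised values agree under $\mathbf{M}$ and $\mathbf{M}'$: for each modality $i$, these are the size-$k_i$ sub-samples drawn from the $e_i$ unperturbed basic elements. A per-attack-type case split (modification/addition/deletion) confirms the uniform product formula $|\Omega_s|=\prod_{i=1}^T \binom{e_i}{k_i}$.

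Second, I would bound the perturbed label probabilities. Because $g$ agrees on every $\omega\in\Omega_s$ across $\mathbf{M}$ and $\mathbf{M}'$, an adversary that places class $A$ on all of $\Omega(\mathbf{M})\setminus\Omega_s$ still leaves at least $|\Omega(\mathbf{M})|\,p_A - (|\Omega(\mathbf{M})|-|\Omega_s|)$ safe sub-samples predicted as $A$, which yields
\[
p'_A \;\geq\; \frac{|\Omega(\mathbf{M})|}{|\Omega(\mathbf{M}')|}\!\left(p_A-1+\frac{|\Omega_s|}{|\Omega(\mathbf{M})|}\right).
\]
A symmetric worst-case argument in which the adversary places class $B$ on all of $\Omega(\mathbf{M}')\setminus\Omega_s$ gives
\[
p'_B \;\leq\; \frac{|\Omega(\mathbf{M})|}{|\Omega(\mathbf{M}')|}\,p_B + 1 - \frac{|\Omega_s|}{|\Omega(\mathbf{M}')|}.
\]
Requiring $p'_A\geq p'_B$ is then sufficient to preserve $A$ as the argmax class under $\mathbf{M}'$, and substituting the Monte-Carlo bounds $p_A\geq\underline{p_A}$ and $p_B\leq\overline{p_B}$ recovers the theorem's inequality in the special case $\delta_l=\delta_u=0$. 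The discrete refinement encoded by $\delta_l,\delta_u$ then comes from the fact that every true label probability is an integer multiple of $1/|\Omega(\mathbf{M})|$, so the worst-case values of $p_A,p_B$ consistent with the Monte-Carlo bounds are obtained by floor/ceiling rounding onto this grid; the expressions for $\delta_l,\delta_u$ implement exactly this rounding. The proof concludes by plugging in the respective $(e_i,n'_i)$ for each of the three attack types to obtain a single unified inequality.

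The main obstacle I anticipate is establishing the discrete-grid refinement that produces $\delta_l,\delta_u$: this requires showing tightness of the worst-case over \emph{integer-valued} allocations of class-$A$ and class-$B$ votes to $\Omega(\mathbf{M})\setminus\Omega_s$ and $\Omega(\mathbf{M}')\setminus\Omega_s$ simultaneously, and checking that this tight worst case is consistent across the three attack types, where addition enlarges $\Omega(\mathbf{M}')$ while keeping all original sub-samples valid, whereas deletion shrinks both $\Omega(\mathbf{M}')$ and $\Omega_s$. A secondary technicality is justifying that $\overline{p_B}$ may stand in as an upper bound on $p'_l$ for every $l\neq A$ and not merely for the Monte-Carlo runner-up, which is valid so long as $B$ is indeed the true runner-up class under $\mathbf{M}$.
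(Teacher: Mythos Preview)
Your proposal is correct and takes essentially the same route as the paper: both partition the sub-sample space into a ``safe'' overlap region (your $\Omega_s$, the paper's $\tilde B$) and two disjoint unsafe regions, then bound how much class-$A$ and class-$B$ mass can move between $\mathbf{M}$ and $\mathbf{M}'$. The paper packages the bound via an explicit statement of the Neyman--Pearson lemma applied to likelihood-ratio sets built from $\tilde A\cup\tilde B'$ and $\tilde B'\cup\tilde C$, whereas you run the equivalent elementary counting argument directly; in this discrete uniform setting the two coincide and produce identical inequalities.

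Two small clarifications on your anticipated obstacles. The discrete refinement is simpler than you fear: no tightness over integer allocations is needed, only the observation that the true $p_A$ is an integer multiple of $1/\prod_i\binom{n_i}{k_i}$ and satisfies $p_A\ge\underline{p_A}$, hence $p_A\ge\underline{p_A}-\delta_l$ (the largest grid point not exceeding $\underline{p_A}$), and symmetrically for $p_B$. Your runner-up concern is not resolved by assuming $B$ is the true runner-up under $\mathbf{M}$; rather, the upper-bound derivation applies verbatim to every $l\neq A$ with $\overline{p_l}$ in place of $\overline{p_B}$, and since the Clopper--Pearson upper bound is monotone in the observed count $N_l$ (and $\alpha/C$ Bonferroni-adjusts across all classes), $\overline{p_l}\le\overline{p_B}$ for every $l\neq A$, so the inequality for $B$ is the binding one.
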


\begin{proof}
Please refer to Appendix~\ref{proof_of_theorem_1}. 
\end{proof}

\myparatight{Computing $\underline{p_B}$ and $\overline{p_A}$} Following~\cite{cohen2019certified,jia2019certified,jia2020intrinsic}, we apply Monte Carlo sampling
to approximate $\underline{p_B}$ and $\overline{p_A}$. We first randomly sub-sample $N$ multi-modal inputs from the test input $\mathbf{M}$, and we denote these ablated inputs as $\mathbf{Z}_1,\mathbf{Z}_2,\cdots, \mathbf{Z}_N$. We denote the number of sub-sampled inputs that predicts for the label $l$ as $N_l$, i.e., $N_l = \sum_{i=1}^{N}\mathbb{I}(g(\mathbf{Z}_i) = l)$. Then, the frequency $N_l$ of any label $l$ follows a binomial
distribution. Therefore, we can apply
Clopper-Pearson~\cite{clopper1934use} based method
to estimate $\underline{p_B}$ and $\overline{p_A}$ with predefined confidence level $1-\alpha$:
\begin{align}
&\underline{p_A} = Beta(\frac{\alpha}{C}; N_A, N-N_A+1)\\
& \overline{p_B} = Beta(1-\frac{\alpha}{C}; N_B, N-N_B+1),
\end{align} where $A$ represents the predicted label, i.e., $A=\text{argmax}_{l }{N_l}$, and $B$ represents the runner-up label, i.e., $\text{argmax}_{l\neq A}{N_l}$. $Beta(\beta; \lambda, \theta)$ calculates the $\beta$-th quantile of the Beta distribution
given shape parameters $\lambda$ and $\theta$. We divide $\alpha$ by the number of classes because we estimate bounds for $C$ classes simultaneously~\cite{jia2020intrinsic}. By Bonferroni
correction, if we use $1-\alpha/C$ as the confidence level to estimate each bound, then the overall confidence level for the $C$ classes is at least $1-\alpha$.

\subsection{Certify Multi-modal Segmentation}\label{sec-certify-segmentation}
In this section, we extend our certification method for classification tasks to certify multi-model segmentation tasks. Segmentation tasks are essentially a variant of classification since each basic element (e.g., a pixel) in one of the input modalities (e.g., an image) is assigned a label. We denote this input modality as $\mathbf{m}_o$, and denote the $j$-th basic element of $\mathbf{m}_o$ as $m^j_o$. Suppose $\mathbf{m}_o$ has $n_o$ basic elements. If we naively apply union bound, certifying the test input with overall confidence level $1-\alpha$ requires certifying each basic element with confidence level $1-\frac{\alpha}{n_o}$, which becomes hard when $n_o$ grows large. To maximize the number of certified basic elements, Fischer et al.~\citep{fischer2021scalable} utilized the Holm–Bonferroni method~\cite{holm1979simple}, originally designed for \emph{Multiple Hypothesis Testing}. Specifically, this method tends to certify basic elements with confident predictions, while abstaining ambiguous basic elements. Furthermore, this method guarantees that the probability of mistakenly reporting at least one non-certifiably-stable basic element as certified is limited at $\alpha$. 
In this work, we adapt the approach from Fischer et al.~\citep{fischer2021scalable} to multi-modal scenarios.

\myparatight{Ensemble Classifiers for Segmentation}
Given a testing input $\mathbf{M}$, we use $\mathcal{Z} = (\mathbf{z}_1,\mathbf{z}_2,\ldots,\mathbf{z}_T)$ to denote the randomly sub-sampled input. The base multi-modal segmentation model can be seen as a composition of multi-modal classifiers $g_1, g_2, \ldots, g_{n_o}$, where $g_j$ predicts a label for the basic element $m^j_o$. We use $g_j(\mathcal{Z})$ to denote the predicted label for $m^j_o$. We randomly sample $N$ ablated inputs from the distribution $\mathcal{Z}$, and represent them as $\mathbf{Z}_1,\mathbf{Z}_2,\cdots, \mathbf{Z}_N$. For each basic element $m^j_o$ and each label $l$, we count the label frequency $N^j_l = \sum_{i=1}^{N}\mathbb{I}(g_j(\mathbf{Z}_i) = l)$. The ensemble classifier for $m^j_o$ (denoted by $G_j$) then predicts the the label $l$ with the highest label frequency $N^j_l$, i.e., $G_j(\mathbf{M}) = \text{argmax}_l N^j_l$. We say $G_j$ is \emph{certifiably stable} for  $m^j_o$ if the predicted label of $G_j$ for $m^j_o$ remains unchanged under attack, i.e., $
    G_j(\mathbf{M}) = G_j(\mathbf{M}'), \forall \mathbf{M}' \in \mathcal{S}(\mathbf{M}, \mathbf{R})$. Next, we discuss how to certify as many basic elements as possible given that the possibility of mistakenly certifying a non-certifiably-stable basic element is at most $\alpha$.
 
\myparatight{Calculate a Confidence Level for Each Basic Element} For each basic element $m^j_o$, we denote the number of ablated inputs that predicts the label $l$ for this component as $N^j_l$. We denote the total number of ablated inputs as $N$. We define:
\begin{align}\label{eqn-probability-lower}
&\underline{p_A}(\alpha_j) = Beta(\frac{\alpha_j}{C}; N^j_A, N-N^j_A+1)\\
& \overline{p_B}(\alpha_j) = Beta(1-\frac{\alpha_j}{C}; N^j_B, N-N^j_B+1),
\end{align} where $A$ represents the predicted label for this basic element, i.e., $\text{argmax}_{l}{N^j_l}$, and $B$ represents the runner-up label for this component, i.e., $\text{argmax}_{l\neq A}{N^j_l}$. Then we define:
\begin{align}
&\alpha_j^* = \min_{\alpha_j} \\
s.t.,\text{ }
&\frac{\prod_{i=1}^T{n_i \choose k_i}}{\prod_{i=1}^T{n'_i \choose k_i}}(\underline{p_A}(\alpha_j)-\delta_{l}-1+ \frac{\prod_{i=1}^T{e_i \choose k_i}}{\prod_{i=1}^T{n_i \choose k_i}}) \\
\geq& 
 \frac{\prod_{i=1}^T{n_i \choose k_i}}{\prod_{i=1}^{T}{n'_i \choose k_i}}(\overline{p_B}(\alpha_j)+\delta_{u})+1- \frac{\prod_{i=1}^T{e_i \choose k_i}}{\prod_{i=1}^T{n'_i \choose k_i}},
\end{align}
where $n_i$, $n_i'$, $e_i$, $k_i$,  $\delta_{l}$ and $\delta_{u}$ are defined as in Theorem~\ref{theorem_of_certified_radius_classification}. Then with probability at least $1-\alpha^*_j$, the basic element $m^j_o$ is certifiably stable (the output label of this basic element cannot be changed by the attacker) according to Theorem~\ref{theorem_of_certified_radius_classification}. In practice, we calculate $\alpha^*_j$ by binary search. If such an $\alpha^*_j$ does not exist, the binary search algorithm returns $1$ instead.

\myparatight{Apply Holm-Bonferroni method}Using the computed values of $\alpha_j^*$, we employ the Holm-Bonferroni method~\cite{holm1979simple} to determine the basic elements eligible for certification. This method maximizes the number of certified basic elements, while at the same time ensures that the possibility of mistakenly certifying a non-certifiably-stable basic element remains within the limit of $\alpha$. Specifically, we have two steps:
\begin{itemize}
    \item Step 1: We order $\alpha_j^*$-values ($j=1,2,\cdots, n_o$) in ascending order so that we have $\alpha_{(1)}^*\leq\alpha_{(2)}^*\leq\cdots\leq\alpha_{(n_o)}^*$.
    \item Step 2: Calculate $L = \min\{j: \alpha_{(j)}^* > \frac{\alpha}{n_o+1-j}\}$.
\end{itemize}
We report all basic elements $m^j_o$ for which $\alpha_j^* < \alpha_{(L)}^*$ as certifiably stable (the output labels of these basic elements cannot be changed by the attacker), and the predictions for other basic elements are abstained. In Section~\ref{section-evaluation}, we derive certified metrics, e.g., Certified Pixel Accuracy, from these certifiably stable basic elements.

\begin{figure*}
\centering

\subfloat[$r_2 = r_1$]{\includegraphics[width=0.24\textwidth]{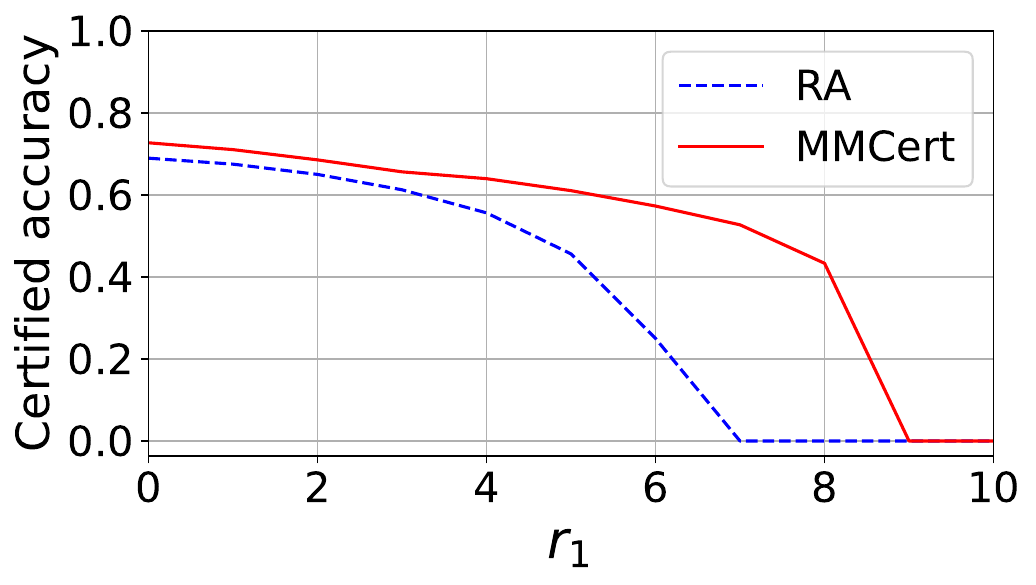}}\vspace{1mm}
\subfloat[$r_2 = 2r_1$]{\includegraphics[width=0.24\textwidth]{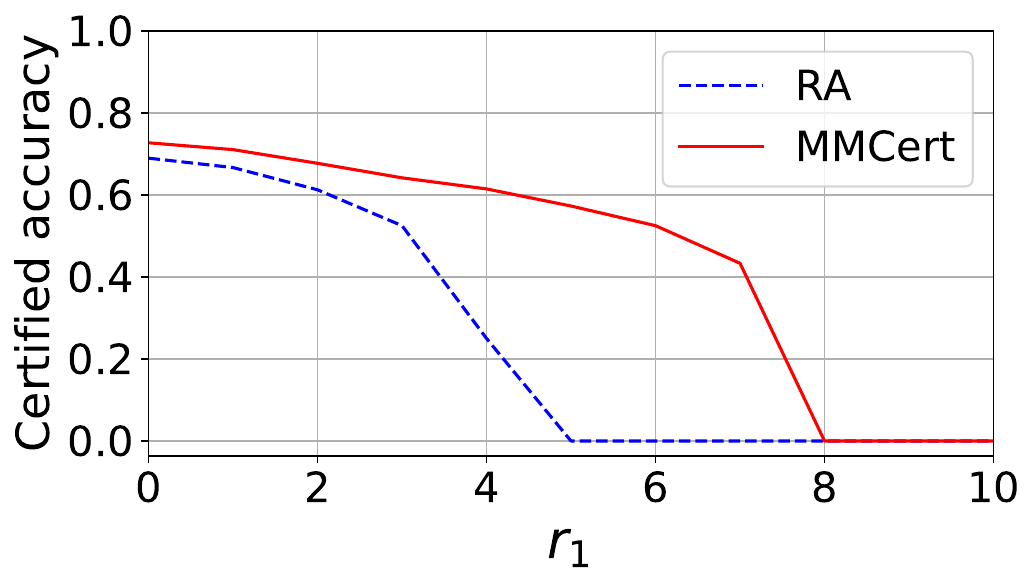}}\vspace{1mm}
\subfloat[$r_2 = 3r_1$]{\includegraphics[width=0.24\textwidth]{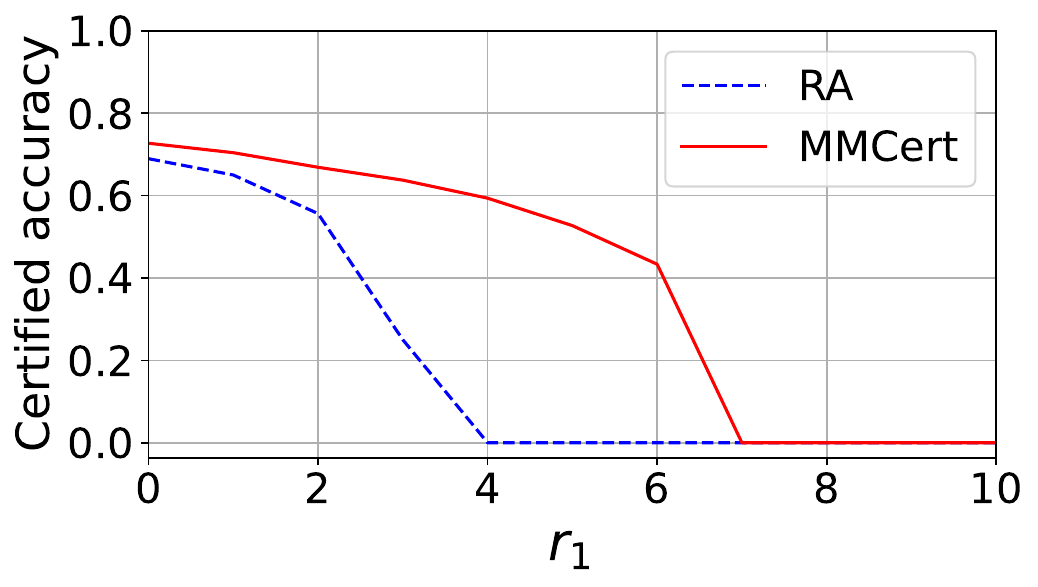}}\vspace{1mm}
\subfloat[$r_2 = 4r_1$]{\includegraphics[width=0.24\textwidth]{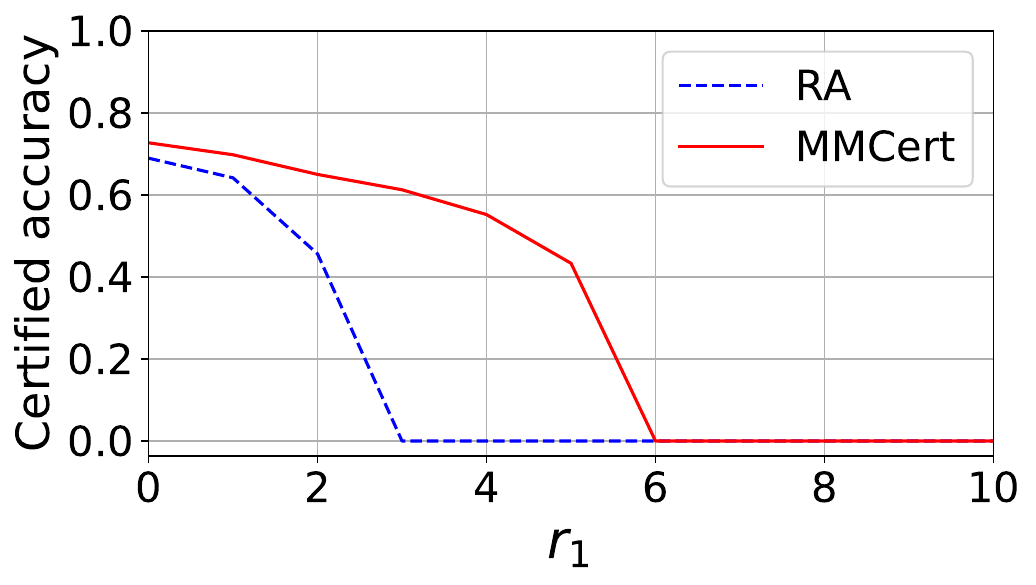}}\vspace{1mm}
\vspace{-4mm}
\caption{Compare our {\name} with randomized ablation on RAVDESS Dataset.
}
\label{exp-ravdess}
\end{figure*}

\begin{figure*}
\vspace{5mm}
\centering
{\includegraphics[width=0.24\textwidth]{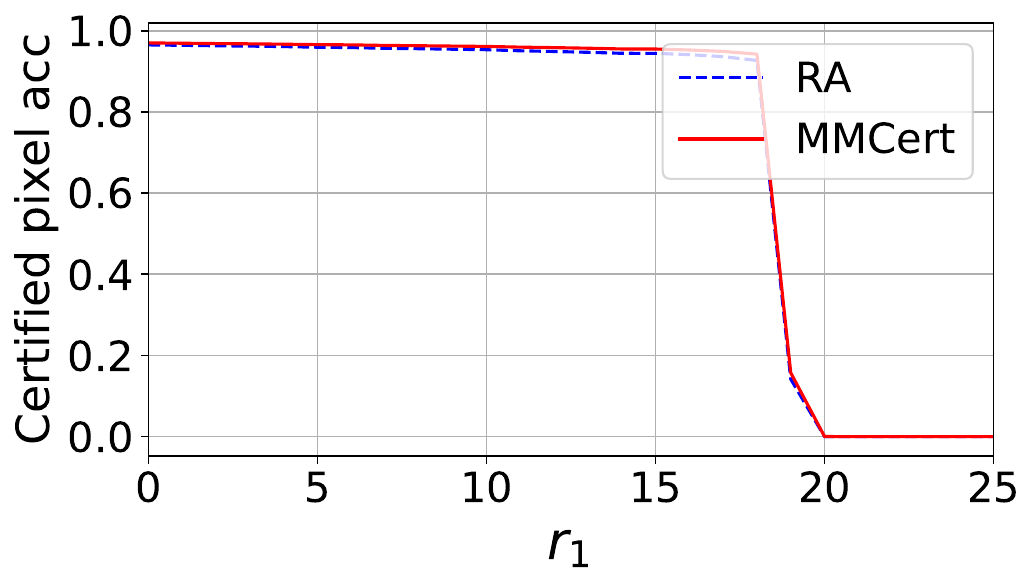}}
{\includegraphics[width=0.24\textwidth]{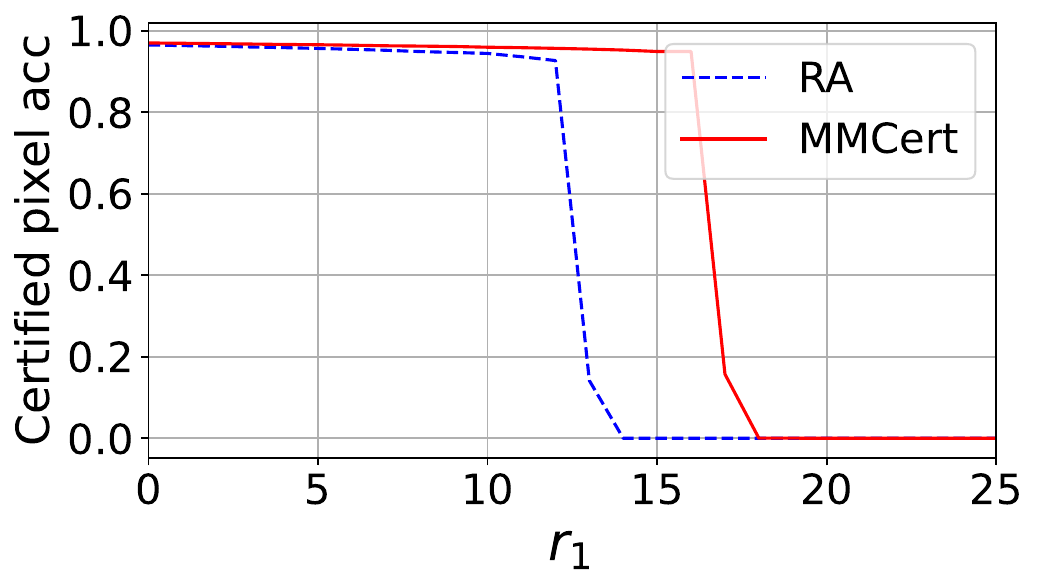}}
{\includegraphics[width=0.24\textwidth]{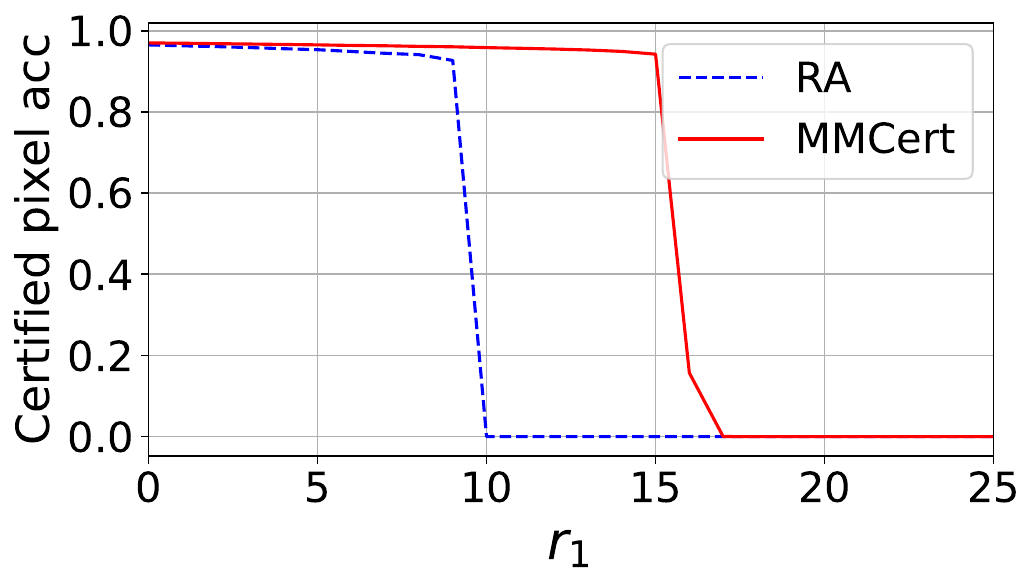}}
{\includegraphics[width=0.24\textwidth]{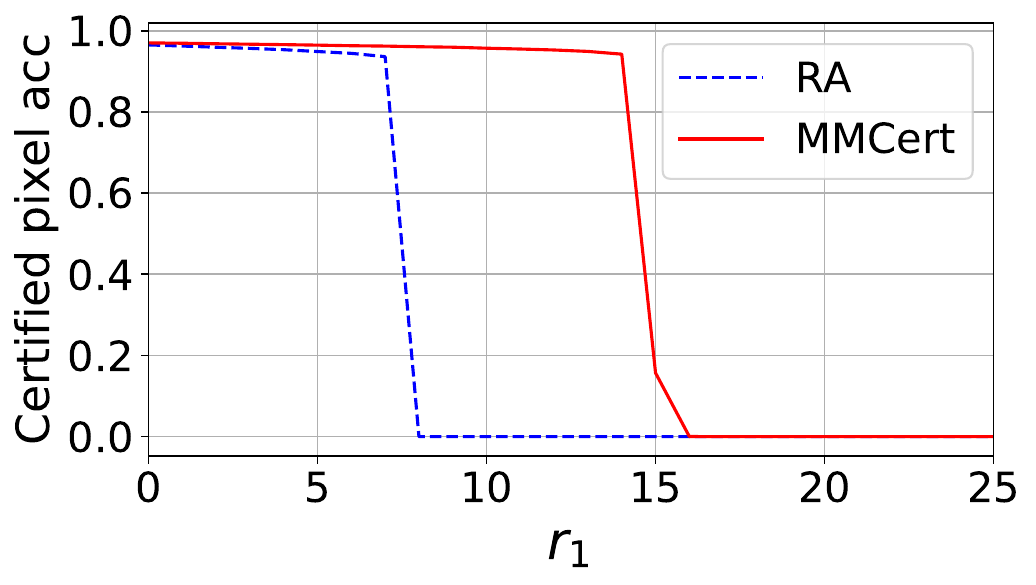}}
{\includegraphics[width=0.24\textwidth]{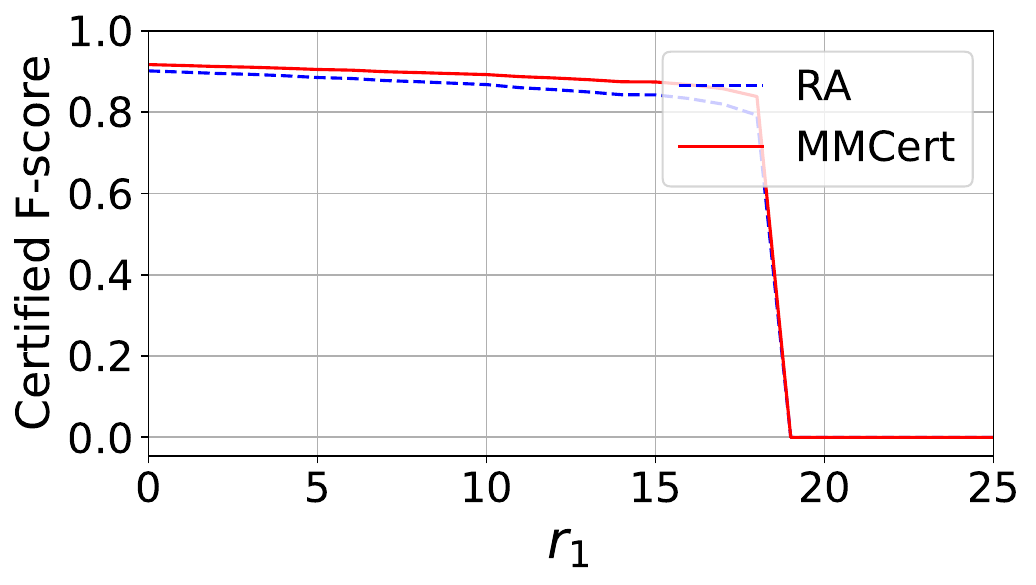}}
{\includegraphics[width=0.24\textwidth]{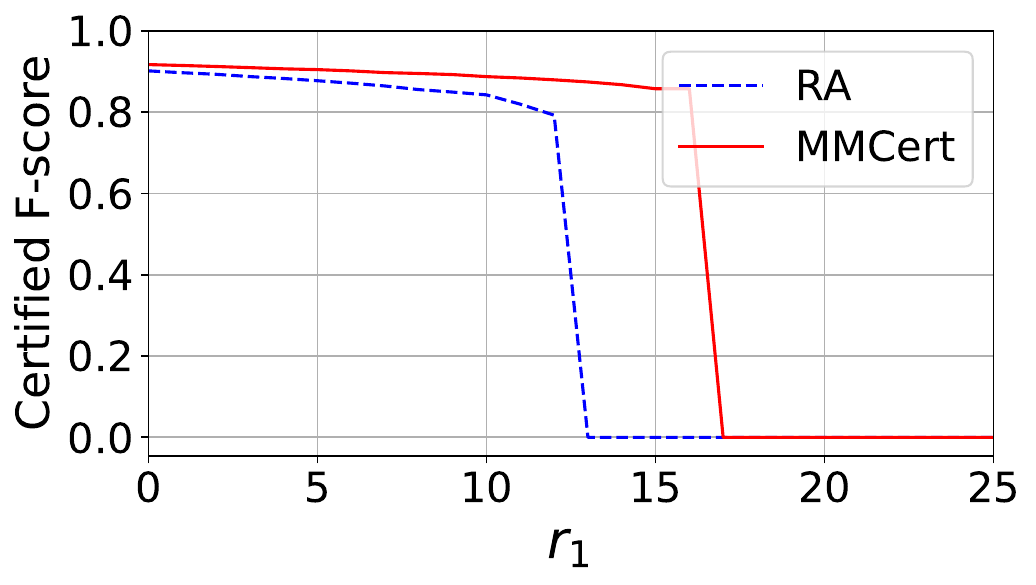}}
{\includegraphics[width=0.24\textwidth]{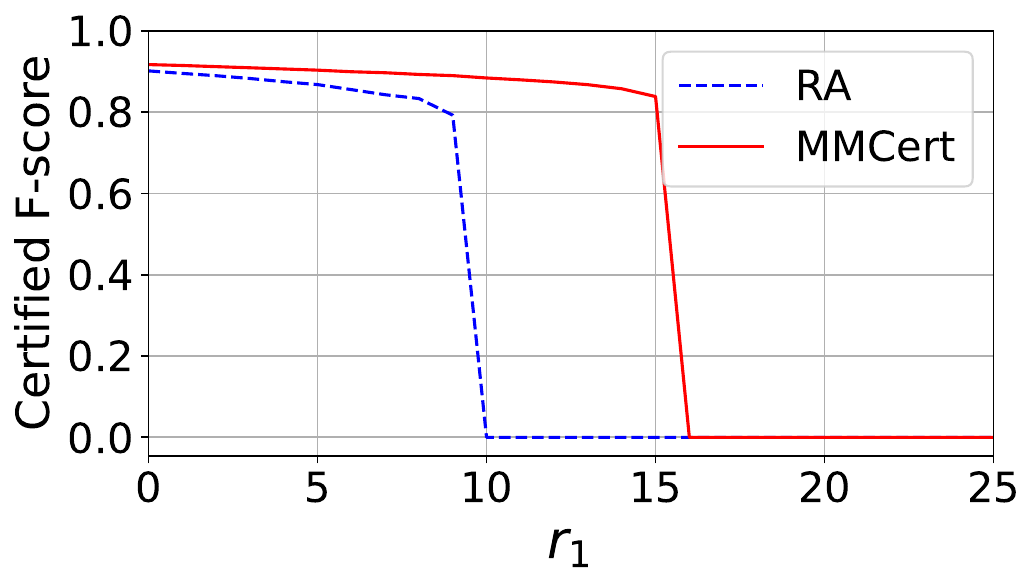}}
{\includegraphics[width=0.24\textwidth]{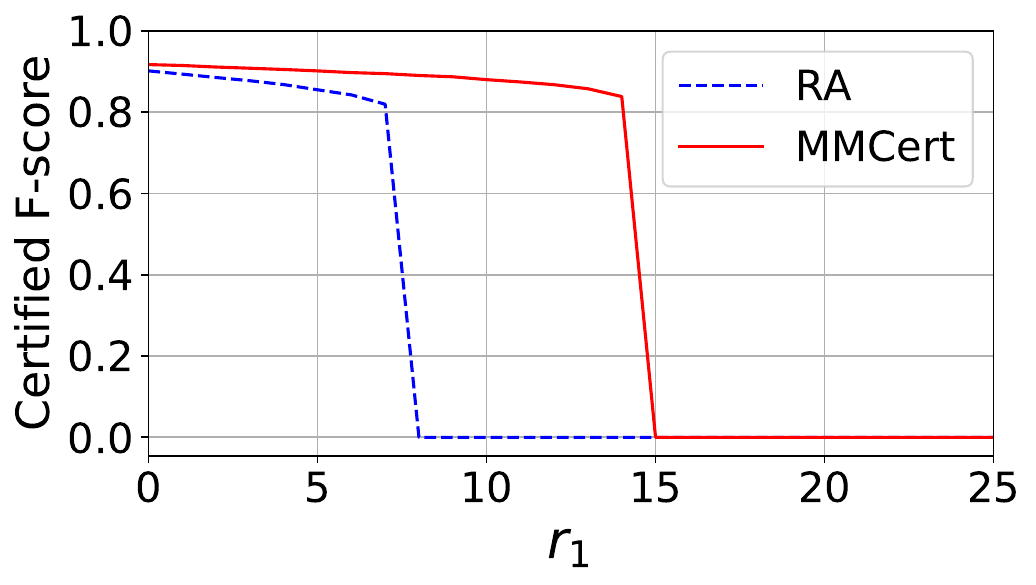}}

\subfloat[$r_2 = r_1$]{\includegraphics[width=0.24\textwidth]{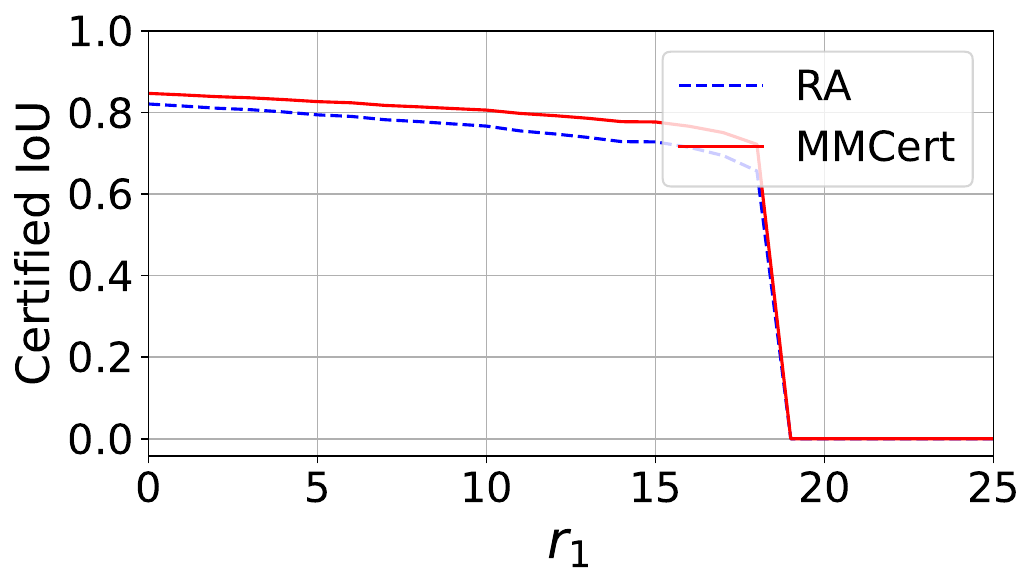}}\vspace{1mm}
\subfloat[$r_2 = 2r_1$]{\includegraphics[width=0.24\textwidth]{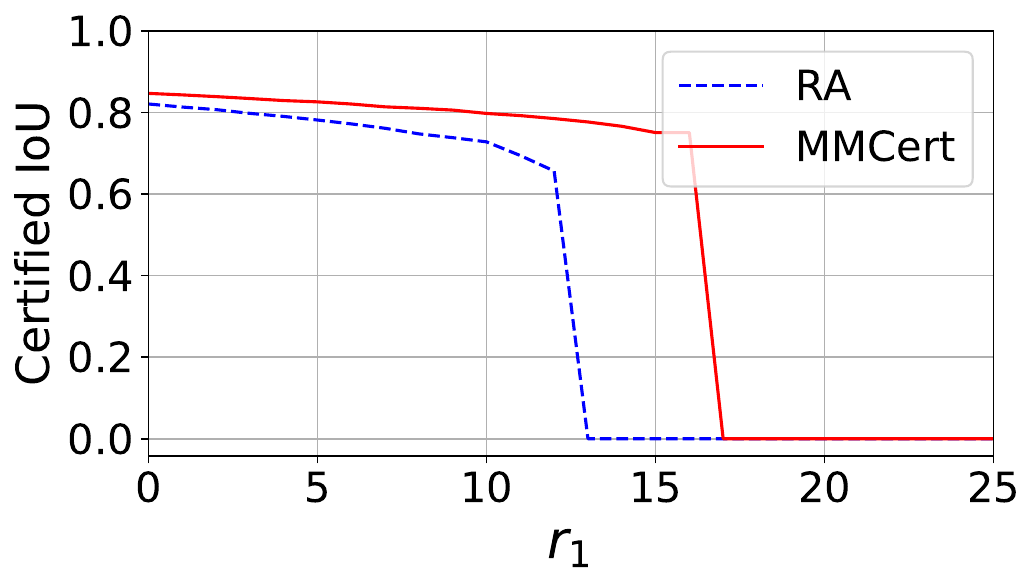}}\vspace{1mm}
\subfloat[$r_2 = 3r_1$]{\includegraphics[width=0.24\textwidth]{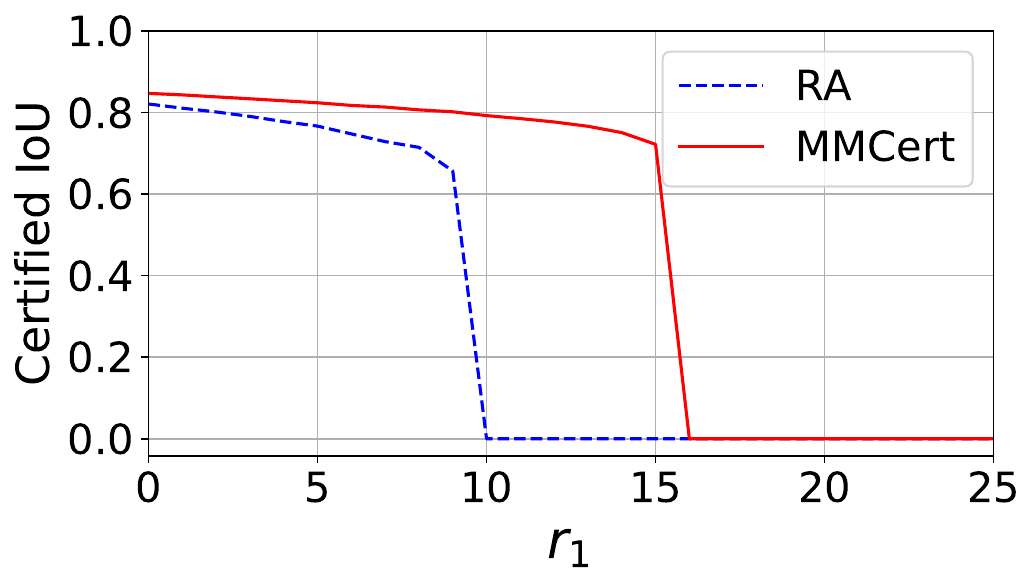}}\vspace{1mm}
\subfloat[$r_2 = 4r_1$]{\includegraphics[width=0.24\textwidth]{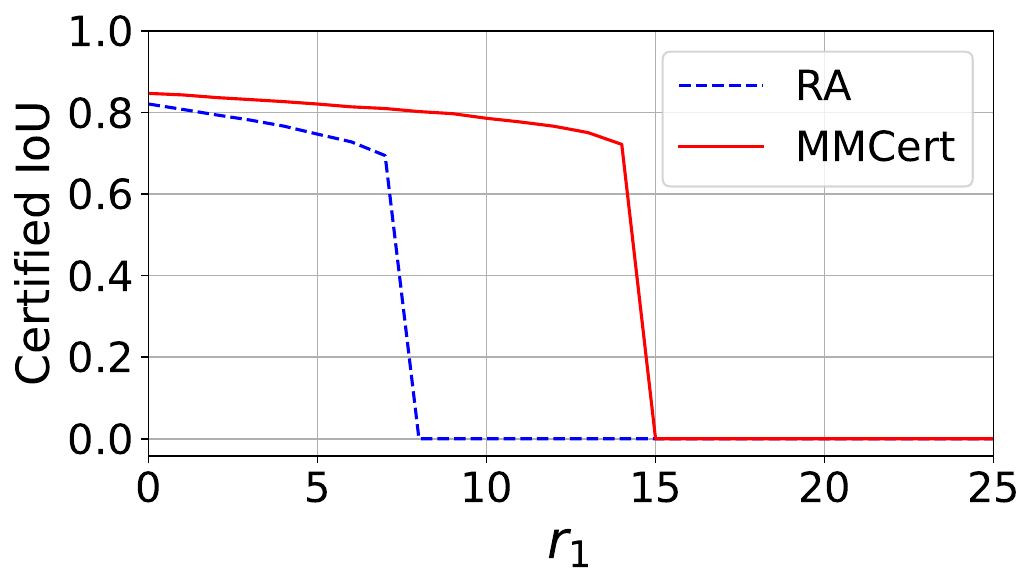}}\vspace{1mm}
\vspace{-4mm}
\caption{Compare our {\name} with randomized ablation on KITTI Road Dataset. Certified Pixel Accuracy (first row), Certified F-score (second row) and Certified IoU (third row) are considered.
}
\label{exp-kitti}
\vspace{-7mm}
\end{figure*}
\begin{figure*}
\centering
\subfloat[$r_2 = r_1$]{\includegraphics[width=0.24\textwidth]{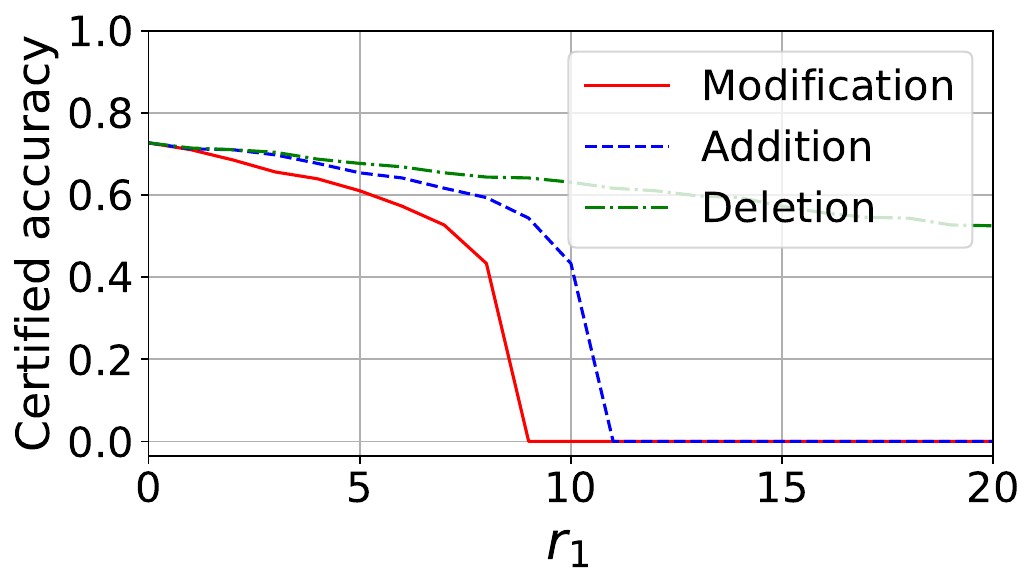}}\vspace{1mm}
\subfloat[$r_2 = 2r_1$]{\includegraphics[width=0.24\textwidth]{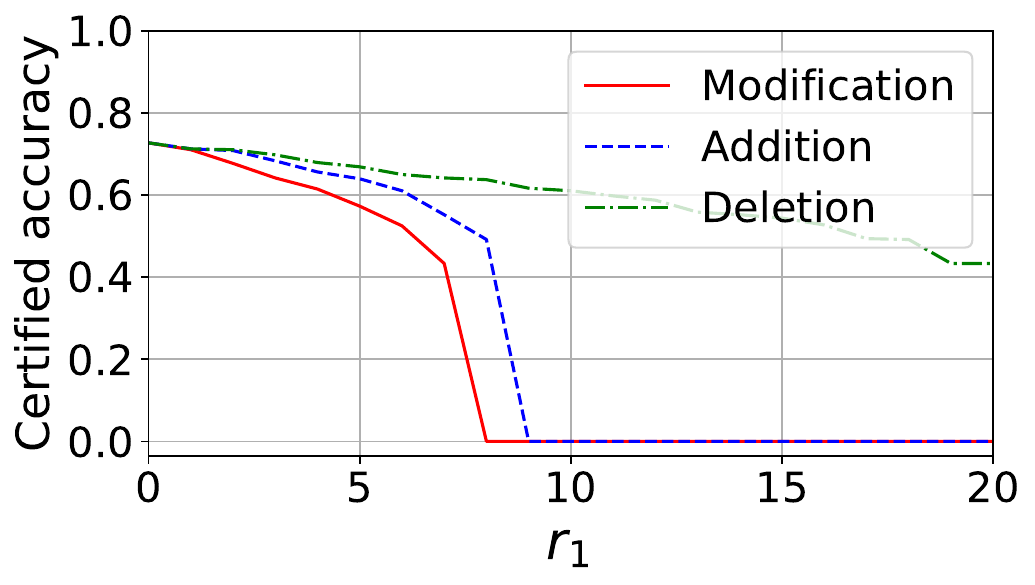}}\vspace{1mm}
\subfloat[$r_2 = 3r_1$]{\includegraphics[width=0.24\textwidth]{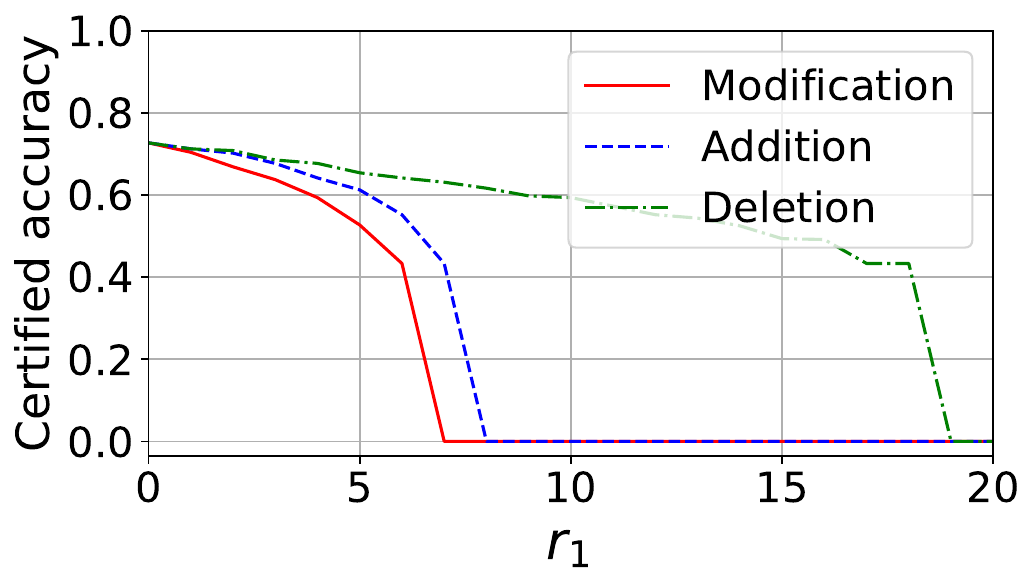}}\vspace{1mm}
\subfloat[$r_2 = 4r_1$]{\includegraphics[width=0.24\textwidth]{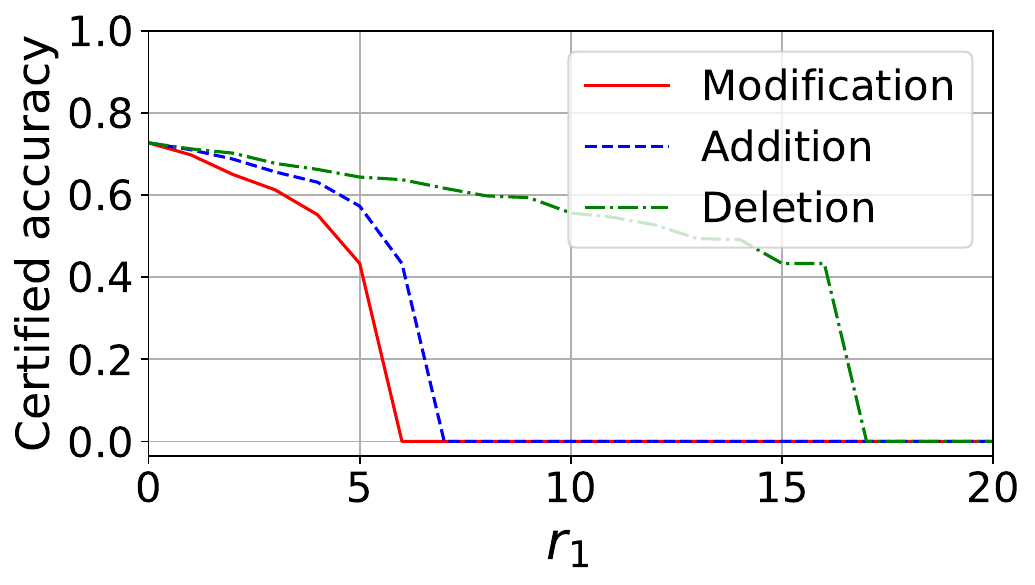}}\vspace{1mm}
\vspace{-4mm}
\caption{Compare different attack types on RAVDESS Dataset.
}
\label{exp-different_attacks}
\end{figure*}
\begin{figure*}
\vspace{0mm}
\centering
{\includegraphics[width=0.24\textwidth]{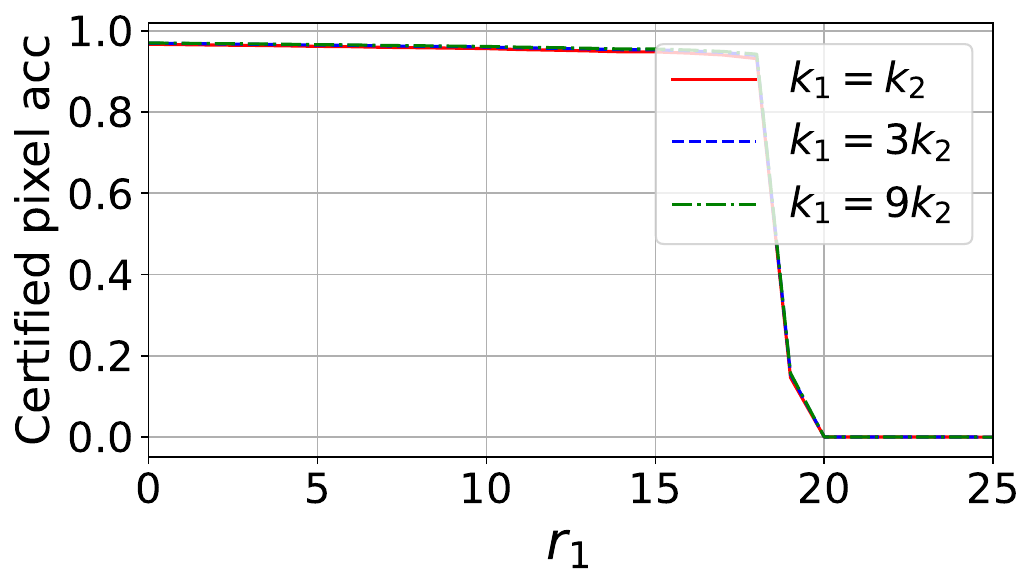}}
{\includegraphics[width=0.24\textwidth]{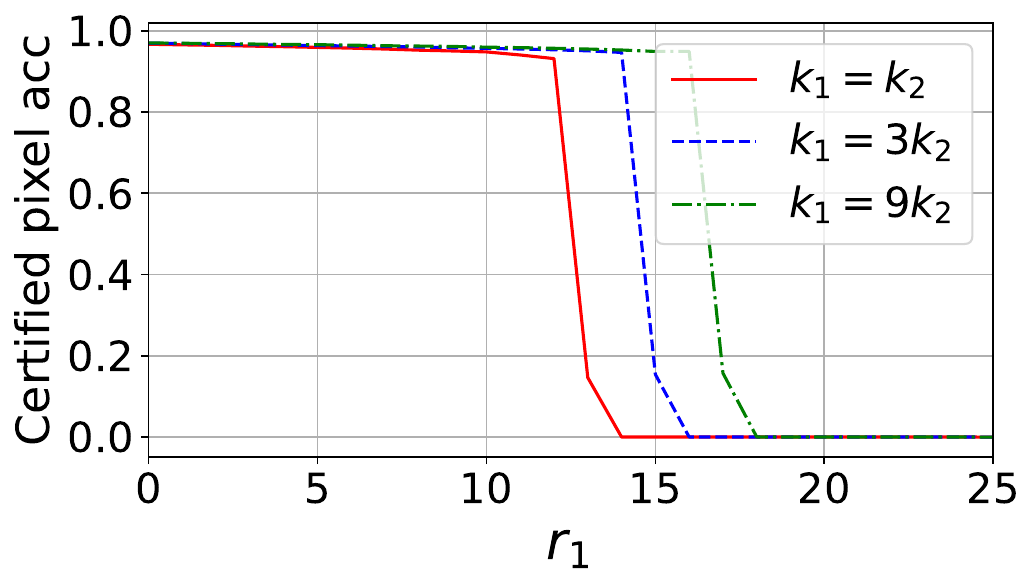}}
{\includegraphics[width=0.24\textwidth]{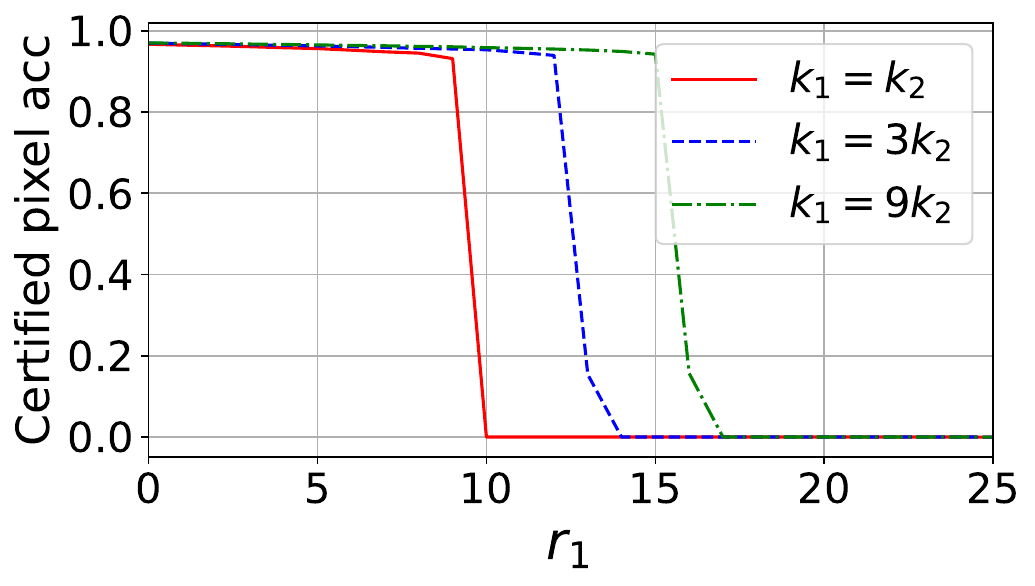}}
{\includegraphics[width=0.24\textwidth]{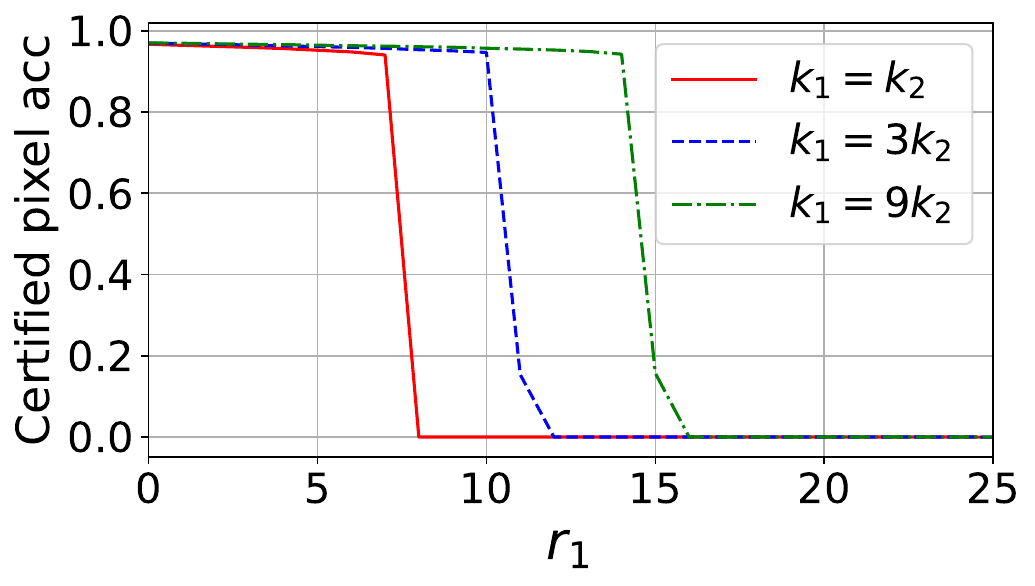}}
{\includegraphics[width=0.24\textwidth]{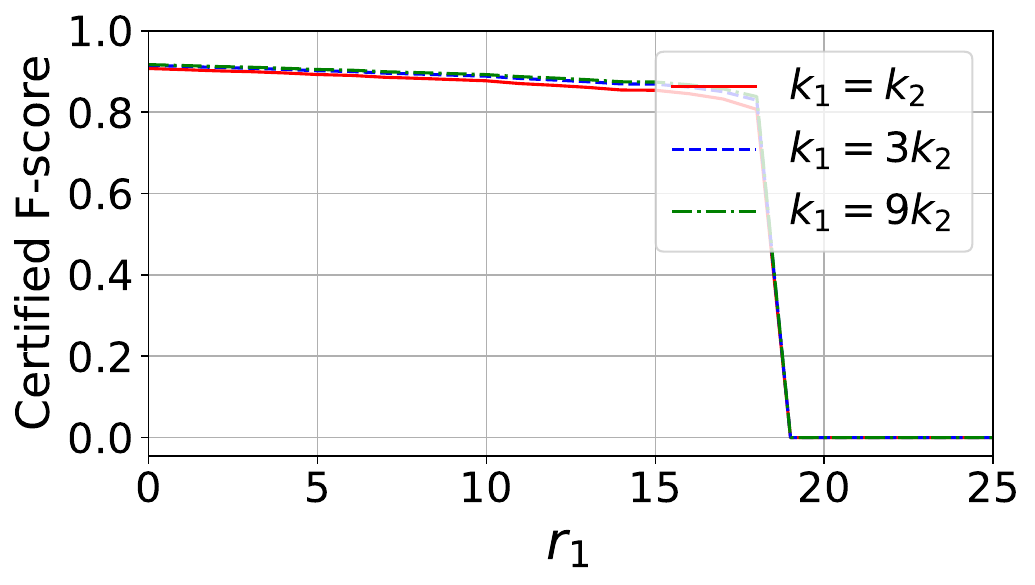}}
{\includegraphics[width=0.24\textwidth]{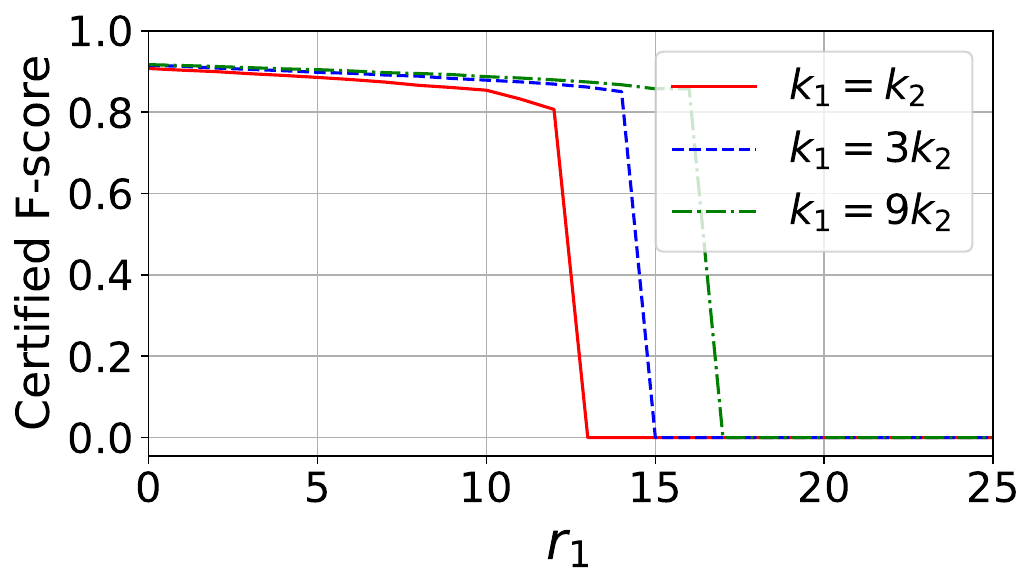}}
{\includegraphics[width=0.24\textwidth]{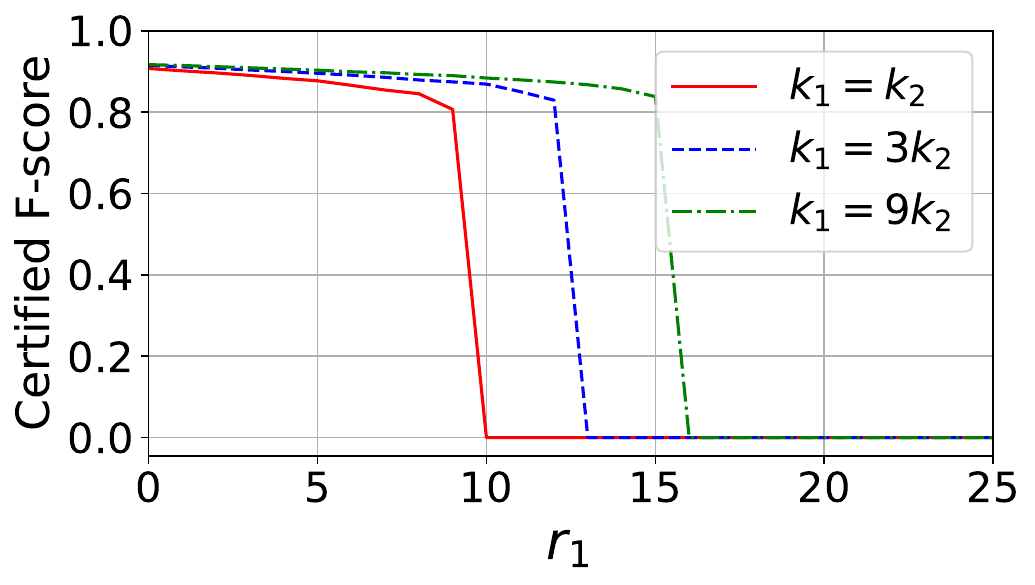}}
{\includegraphics[width=0.24\textwidth]{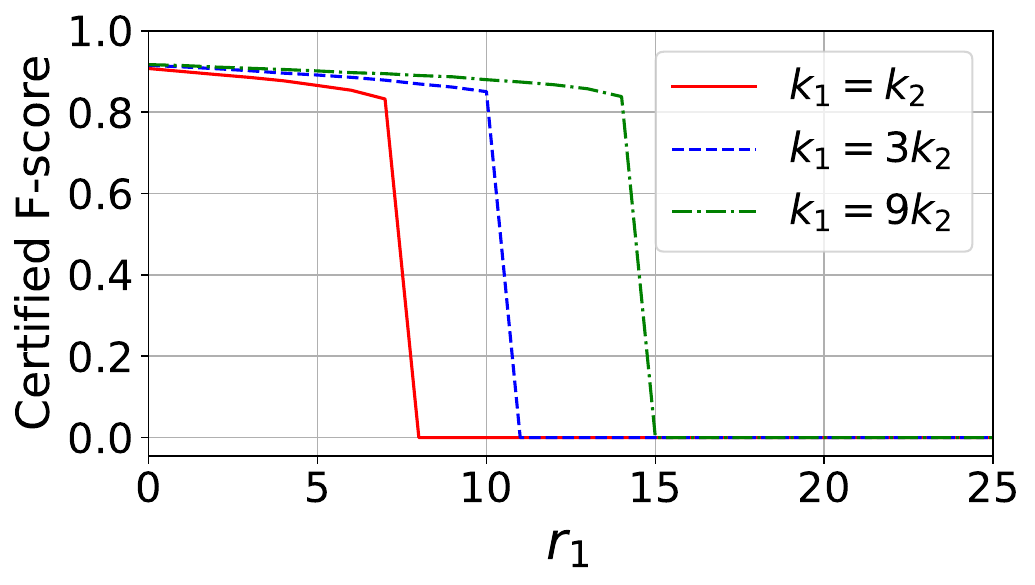}}
\subfloat[$r_2 = r_1$]{\includegraphics[width=0.24\textwidth]{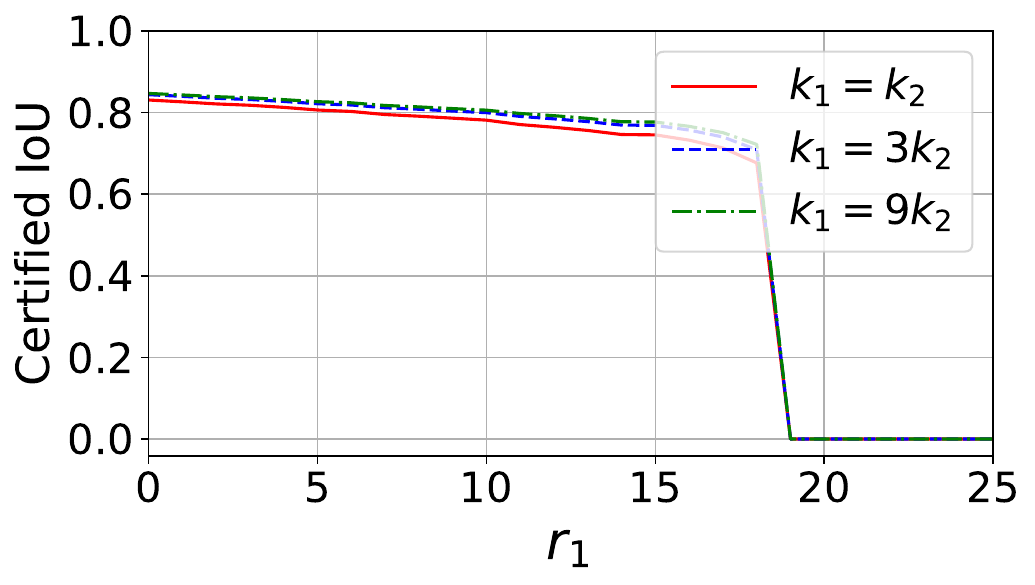}}\vspace{1mm}
\subfloat[$r_2 = 2r_1$]{\includegraphics[width=0.24\textwidth]{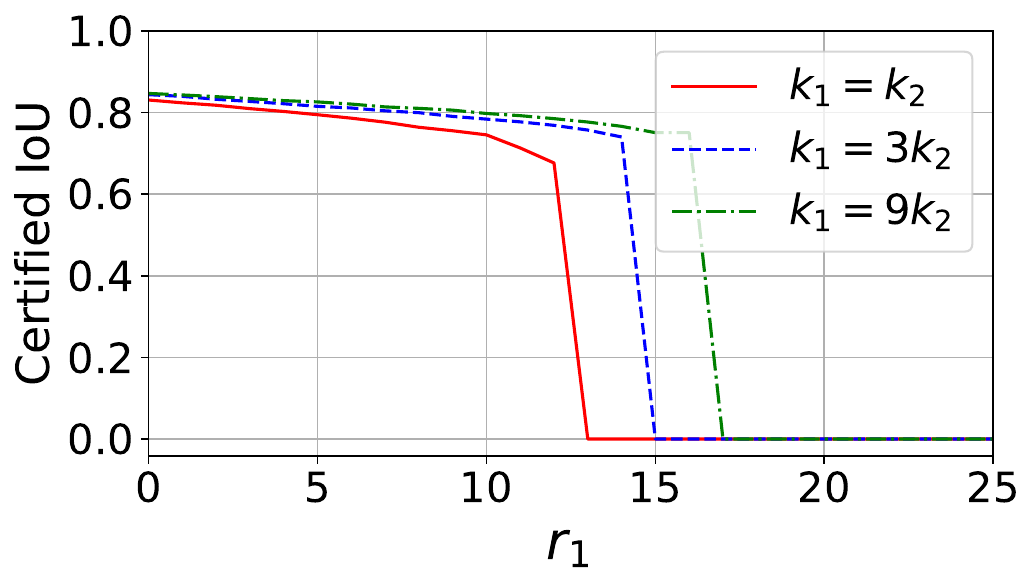}}\vspace{1mm}
\subfloat[$r_2 = 3r_1$]{\includegraphics[width=0.24\textwidth]{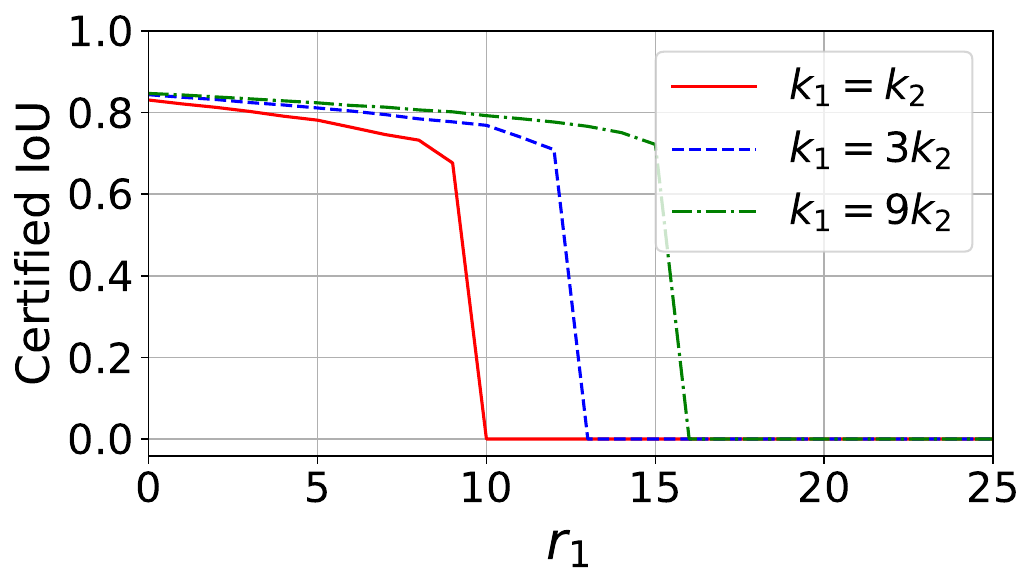}}\vspace{1mm}
\subfloat[$r_2 = 4r_1$]{\includegraphics[width=0.24\textwidth]{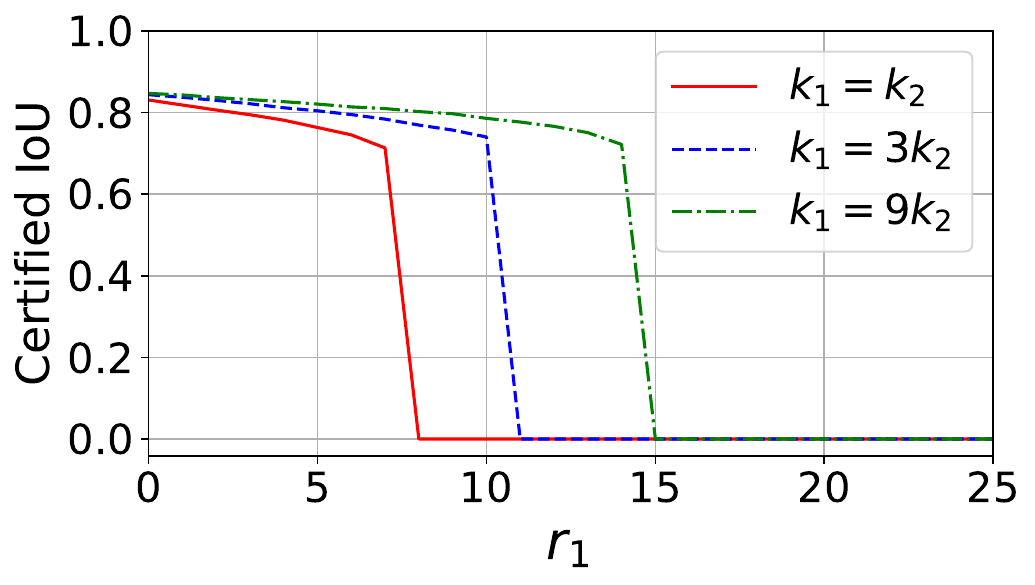}}\vspace{1mm}
\vspace{-4mm}
\caption{Impact of the ratio between $k_1$ and $k_2$. Certified Pixel Accuracy (first row), Certified F-score (second row) and Certified IoU (third row) are considered.
}
\label{exp-k-ratio}
\vspace{-7mm}
\end{figure*}

\section{Evaluation}\label{section-evaluation}
In this section, we demonstrate the effectiveness of our method on multi-modal emotion recognition task and multi-modal road segmentation task.
\subsection{Experimental Setup}
\label{sec:exp-setup}


\myparatight{Datasets} 
We use the following benchmark datasets in our evaluation: RAVDESS~\cite{livingstone2018ryerson}
for the multi-modal emotion recognition task and KITTI Road~\cite{KITTI-Road} for the multi-modal road segmentation task. Details of the datasets can be found in Appendix~\ref{appendix-datasets}.

 \myparatight{Models} For the multi-modal emotion recognition task, we follow the pipeline proposed by~\cite{chumachenko2022self}. Specifically, we utilize EfficientFace~\cite{zhao2021robust}
(a recently proposed facial expression recognition architecture) to extract features from
image frames, and use 1D
convolutional layers to extract features from
audio frames. Then we use intermediate attention-based fusion~\cite{chumachenko2022self} to combine features extracted from these two modalities. This fusion method ensures that features that are consistent between both modalities have the most significant impact on the final prediction.
 
 As for the multi-modal road segmentation task, we apply SNE-RoadSeg~\cite{fan2020sne}, which is capable of merging features from both RGB images and depth images for road segmentation. Specifically, this method first computes surface normal information from 
depth images, and then
employs a data-fusion CNN architecture to fuse features from both RGB images and the
inferred surface normal information for accurate prediction.

We note that if we directly use the original training recipes for these models, we get low prediction accuracy for randomly sub-sampled testing inputs, and the certified robustness of {\name} and randomized ablation~\cite{levine2020robustness} would be low as both rely on predicting ablated inputs. In response, we perform data augmentation by randomly ablate training inputs, such that the distribution of training data can match that of testing data. We note that this is standard practice for randomized smoothing-based certification methods~\cite{cohen2019certified}.
For {\name}, we independently sub-sample between 0\% and 5\% of basic elements from each modality, ablating the rest. For randomized ablation~\cite{levine2020robustness}, we randomly sample between 0\% and 5\% of basic elements collectively from the two modalities to keep and ablate the remaining elements. Specifically, we initially merge the basic elements of both modalities into one list. After sampling and ablating, we then split the modified list back into two separate modalities.

\myparatight{Compared Method} We compare our method with randomized ablation~\cite{levine2019sparse}, which is the state-of-the-art certification method for $l_0$ attacks on a single image. 

We adapt randomized ablation to multi-modal models by combining the sets of basic elements from each modality. Given the original input $(\mathbf{m}_1,\mathbf{m}_2,\ldots,\mathbf{m}_T)$, where $\mathbf{m}_i = [m^1_i,m^2_i, \ldots,m^{n_i}_i]$ 
, we combine all modalities to get $\mathbf{m} = [m^1_1,m^2_1, \ldots,m^{n_1}_1,\ldots, m^1_T,m^2_T, \ldots,m^{n_T}_T]$. We denote the size of $\mathbf{m}$ as $n = \sum_{i = 1}^T n_i$. Then we randomly sample $k$ elements from $\mathbf{m}$ without replacement to get a subset $\mathbf{z} \subseteq \mathbf{m}$. Finally, we divide $\mathbf{z}$ back to $T$ modalities, i.e., $\mathbf{z}_i = \mathbf{z} \cap \mathbf{m}_i$, and $(\mathbf{z}_1,\mathbf{z}_2,\ldots, \mathbf{z}_T)$ is the randomly ablated input. We make the final prediction by taking the majority vote of all ablated multi-modal inputs. 

For multi-modal classification tasks, we use the same certification process for randomized ablation as in the original paper~\cite{levine2019sparse}. For multi-modal segmentation tasks, we follow the same certification process as described in Section~\ref{sec-certify-segmentation} as the original work~\cite{levine2019sparse} only considered classification tasks. The only difference is that we define $\alpha_j^*$ as:
{\small
\begin{align}
&\alpha_j^* = \min_{\alpha_j}\\
s.t., \text{ }\text{ }&\underline{p_A}(\alpha_j)-1+ \frac{{n- \sum_{i = 1}^T r_i \choose k}}{{n\choose k}}
\geq
 \overline{p_B}(\alpha_j)+1- \frac{{n-\sum_{i = 1}^T r_i \choose k}}{{n\choose k}}.
\end{align}}
It is worth noting that in this context, $r_i$ represent the maximum number of modified basic elements in the $i$-th modality. The original work did not take into account addition and deletion attacks, as~\cite{levine2019sparse} focuses on image domain.

\myparatight{Parameter Settings} 
By default, we focus on modification attacks, where $r_i$ denote the maximum basic elements that can be modified by the attacker in $i$th modality.

For the multi-modal emotion recognition task, the visual modality contains 108 image frames, while the audio modality contains 79,380 audio frames. Without loss of generality, we denote the maximum number of modified image frames as $r_1$ and the maximum number of modified audio frames as $r_2$. The default setting is that the attacker can modify equal or more audio frames than image frames. That is, we let $r_2 = \hat{c} \cdot r_1$, for $\hat{c} = 1, 2, 3, 4$.  We set $k_1 = 5$ and $k_2 = 1,000$. For randomized ablation, $k$ is set to 3,000 such that, when there is no attack ($r_1 = r_2 = 0$), the accuracy of randomized ablation is similar to our {\name}. In Appendix~\ref{exp-r1-geq-r2}, we show the case where an attacker can modify more image frames than audio frames ($r_1 > r_2$).

For the multi-modal road segmentation task, the first modality is a RGB image that consists of $375\times 1,242$ pixels, where each pixel has three channels (representing the three primary colors), while the second modality is a depth image that has the same number of pixels, but each pixel has a single channel for depth. The default setting is that the attacker can modify equal or more pixels from the depth image than pixels from the RGB image. Specifically, we test for $r_2 = \hat{c} \cdot r_1$ where $\hat{c} = 1,2,3,4$.  For our {\name}, we set $k_1 = 9,000$ and $k_2 = 1,000$. 
Regarding randomized ablation, 
we set the total number of retained pixels $k$ to 10,000 
such when there is no attack ($r_1$ = $r_2$ = 0), the accuracy of randomized ablation is similar to our {\name}. In Appendix~\ref{exp-r1-geq-r2}, we show the case where the attacker can change more pixels from the RGB image than pixels from the depth image ($r_1>r_2$).

For Monte Carlo sampling, we set $N=100$ and $\alpha = 0.001$ for all experiments.

\myparatight{Evaluation Metrics}
We use \emph{Certified Accuracy} as the evaluation metric for the multi-modal emotion recognition task, and use \emph{Certified Pixel Accuracy}, \emph{Certified F-score}, and \emph{Certified IoU} as the evaluation metrics for the multi-modal road segmentation task. 

\begin{itemize}
    \item \myparatight{Certified Accuracy}Certified Accuracy is defined as the fraction of testing inputs whose predicted labels are not only correct but also verified to be unchanged by an attacker, i.e., certifiably stable. A testing sample for a multi-modal model can be represented as $(\mathbf{M}, y) \in D_{test}$, where $D_{test}$ is the testing dataset. $\mathbf{M}$ is the multi-modal test input and $y$ is the ground truth label. We use $G$ to denote the multi-modal classifier. Then we can define Certified Accuracy as:
    \begin{align}
        & \frac{\sum_{(\mathbf{M}, y) \in \mathcal{D}_{test}}\mathbb{I}(IsStable(\mathbf{M}) \wedge G(\mathbf{M}) = y)}{|\mathcal{D}_{test}|}.
    \end{align}
    $IsStable(\mathbf{M})$ is true if and only if for all $\mathbf{M}' \in \mathcal{S}(\mathbf{M}, \mathbf{R})$, we have $G(\mathbf{M}) = G(\mathbf{M}')$. $\mathbb{I}$ is the indicator function, and $|\mathcal{D}_{test}|$ is the total number of testing inputs in $\mathcal{D}_{test}$. 

    \item \myparatight{Certified Pixel Accuracy (or F-score or IoU)} Here we consider these certified metrics for the purpose of freespace detection~\cite{fan2020sne}. For general purposed segmentation tasks, mean values over different classes should be considered. The Certified Pixel Accuracy (or F-score or IoU) is defined as the average Pixel Accuracy (or F-score or IoU) lower bound of testing inputs under a given adversarial perturbation space $\mathbf{R}$. We use $j \in [n_o]$ to denote the index of a basic element of the segmented input modality $\mathbf{m}_o$. We define: $$TP = |\{j: (G_j(\mathbf{M})=y_j = 1) \wedge IsStable(\mathbf{M},j)\}|,$$ $$TN = |\{j: (G_j(\mathbf{M})=y_j = 0) \wedge IsStable(\mathbf{M},j)\}|,$$  $$FP = |\{j: G_j(\mathbf{M}) = 1\}| - TP, \text{and}$$ $$FN = |\{j: G_j(\mathbf{M}) = 0\}| - TN,$$ where label 1 represents freespace and label 0 represents non-freespace. $IsStable(\mathbf{M},j)$ is true if and only if the predicted label of the $j$th basic element of $\mathbf{m}_o$ cannot be changed by the attacker, i.e., $
    G_j(\mathbf{M}) = G_j(\mathbf{M}'), \forall \mathbf{M}' \in \mathcal{S}(\mathbf{M}, \mathbf{R})$. Then for an individual test sample, we have $\text{Certified Pixel Accuracy} = \frac{TP+TN}{TP+TN+FP+FN}$,
       $\text{Certified F-score} = \frac{2TP^2}{2TP^2+TP(FP+FN)}$
, and
       $\text{Certified IoU} = \frac{TP}{TP+FP+FN}$. 
To obtain the final metrics, we compute the average of these values across all test samples.
\end{itemize}
\subsection{Experimental Results}
In this section, we first compare our method with an existing state-of-the-art method, followed by an analysis of the impact of hyper-parameters on {\name}. Then, we show the performance of our method on attack types other than modification attack, i.e., addition and deletion attacks.

\myparatight{Our {\name} Outperforms Existing State-of-the-Art Method} Figure~\ref{exp-ravdess} and Figure~\ref{exp-kitti} show the comparison result between our {\name} and randomized ablation~\cite{levine2019sparse}, which is the state-of-the-art certified defense against $l_0$ attacks. We can see that our {\name} consistently outperforms randomized ablation on both tasks, for all combinations of $r_1$ and $r_2$
. For example, Figure~\ref{exp-ravdess} shows that on the RAVDESS dataset, when $r_1 = r_2 = 8$ (the attacker can modify 8 frames in both visual and audio modalities), our {\name} can guarantee correct predictions for more than $40\%$ of the test samples, while randomized ablation can guarantee $0\%$ of the test samples. 

Our method is more effective than randomized ablation because of two reasons. First, our method provides an adaptive selection of $k_1$ and $k_2$ to control the fraction of sub-sampled basic elements, i.e., $\frac{k_1}{n_1}$ and $\frac{k_2}{n_2}$, of the two modalities. In contrast, for randomized ablation, the sub-sampled fractions for both modalities are the identical on average, i.e., $\frac{k}{n}$. This means that randomized ablation is essentially a special case of our {\name}. Secondly, our {\name} is more stable than randomized ablation during both training and testing phases. In our method, the count of sub-sampled basic elements remains constant at $k_1$ and $k_2$ for each modality. Meanwhile, in randomized ablation, this count, adding up to $k$, fluctuates. As a result, our method's sub-sampled input space is smaller than that of randomized ablation, enhancing stability.

\myparatight{Impact of $k_1$ and $k_2$} 
Here we study the impact of $k_1$ and $k_2$ on the performance of our {\name}. To simplify the analysis, we perform the experiment on KITTI Road Dataset such that we have $n_1 = n_2$. This setup allows a direct comparison of the attacker's capability across two modalities using $r_1$ and $r_2$. We keep the sum of $k_1$ and $k_2$
  constant at 10,000 but vary their ratio. Three specific ratios were tested: $k_1 = k_2$, $k_1 = 3k_2$, and $k_1 = 9k_2$. The results are presented in Figure~\ref{exp-k-ratio}. We observe that with $r_2 = r_1$ (indicating similar attack capabilities on both modalities), different ratios of $k_1$ and $k_2$  have similar performance outcomes. However, for $r_2 > r_1$ (where the attacker has more attack capability on the second modality), strategies with a larger $k_1/k_2$
  ratio demonstrated better robustness. For example, if $r_2 >r_1$, the $k_2 = 9k_1$ sub-sampling strategy consistently outperforms $k_2 = 3k_1$, with this advantage magnifying as $r_2/r_1$ increased.. Therefore, in practice, it is advantageous to sub-sample fewer basic elements from the modality with higher attack capability and sub-sample more basic elements from the modality with lower attack capability, provided this doesn't compromise the utility (accuracy when there is no attack).
  
\myparatight{Different Attack Types}
We previously focused on modification attacks, where the attacker modifies at most $r_1$ and $r_2$ basic elements for the two respective modalities. Our method also allows the attacker to add or delete basic elements from each modality. Here we do experiments in scenarios where the attacker can add (or delete) at most $r_1$ and $r_2$ basic elements respectively for the two modalities, and compare with the modification attack scenario. The results are shown in Figure~\ref{exp-different_attacks}. We can see that modification attack is the strongest attack type. For example, when $r_1$ and $r_2$ are both 10, modification attack brings the certified accuracy down to 0. In contrast, the addition attack maintains a certified accuracy greater than 0.4, and the deletion attack maintains a certified accuracy greater than 0.6.

\vspace{-2mm}
\section{Conclusion}
\vspace{-2mm}
In this work, we propose {\name}, the first certified defense against adversarial attacks for multi-modal models. Our experimental results show that {\name} significantly improves the certified robustness guarantees by leveraging a modality-independent sub-sampling strategy.

\myparatight{Acknowledgements}
We thank the anonymous reviewers for their valuable feedback on our paper, which significantly improved the quality of our work.
{
    \small
    \bibliographystyle{ieeenat_fullname}
    \bibliography{egbib}

\begin{thebibliography}{63}
\providecommand{\natexlab}[1]{#1}
\providecommand{\url}[1]{\texttt{#1}}
\expandafter\ifx\csname urlstyle\endcsname\relax
  \providecommand{\doi}[1]{doi: #1}\else
  \providecommand{\doi}{doi: \begingroup \urlstyle{rm}\Url}\fi

\bibitem[KIT()]{KITTI-Road}
{KITTI} {R}oad {D}ataset.
\newblock \url{https://www.cvlibs.net/datasets/kitti/eval_road.php}.
\newblock Accessed: 2023-10-01.

\bibitem[mul()]{multimodal-emotion-recognition}
{M}ulti-modal {E}motion {R}ecognition {I}mplementation.
\newblock \url{https://github.com/katerynaCh/multimodal-emotion-recognition}.
\newblock Accessed: 2023-10-01.

\bibitem[Antol et~al.(2015)Antol, Agrawal, Lu, Mitchell, Batra, Zitnick, and Parikh]{antol2015vqa}
Stanislaw Antol, Aishwarya Agrawal, Jiasen Lu, Margaret Mitchell, Dhruv Batra, C~Lawrence Zitnick, and Devi Parikh.
\newblock Vqa: Visual question answering.
\newblock In \emph{ICCV}, 2015.

\bibitem[Athalye et~al.(2018)Athalye, Carlini, and Wagner]{athalye2018obfuscated}
Anish Athalye, Nicholas Carlini, and David Wagner.
\newblock Obfuscated gradients give a false sense of security: Circumventing defenses to adversarial examples.
\newblock In \emph{ICML}, 2018.

\bibitem[Carlini and Wagner(2017)]{carlini2017adversarial}
Nicholas Carlini and David Wagner.
\newblock Adversarial examples are not easily detected: Bypassing ten detection methods.
\newblock In \emph{Proceedings of the 10th ACM workshop on artificial intelligence and security}, 2017.

\bibitem[Cheng et~al.(2023)Cheng, Choi, Liang, Feng, Tao, Liu, Zuzak, and Zhang]{cheng2023fusion}
Zhiyuan Cheng, Hongjun Choi, James Liang, Shiwei Feng, Guanhong Tao, Dongfang Liu, Michael Zuzak, and Xiangyu Zhang.
\newblock Fusion is not enough: Single-modal attacks to compromise fusion models in autonomous driving.
\newblock \emph{arXiv preprint arXiv:2304.14614}, 2023.

\bibitem[Chiang et~al.(2020)Chiang, Ni, Abdelkader, Zhu, Studer, and Goldstein]{chiang2020certified}
Ping-yeh Chiang, Renkun Ni, Ahmed Abdelkader, Chen Zhu, Christoph Studer, and Tom Goldstein.
\newblock Certified defenses for adversarial patches.
\newblock \emph{arXiv preprint arXiv:2003.06693}, 2020.

\bibitem[Chumachenko et~al.(2022)Chumachenko, Iosifidis, and Gabbouj]{chumachenko2022self}
Kateryna Chumachenko, Alexandros Iosifidis, and Moncef Gabbouj.
\newblock Self-attention fusion for audiovisual emotion recognition with incomplete data.
\newblock In \emph{ICPR}. IEEE, 2022.

\bibitem[Clopper and Pearson(1934)]{clopper1934use}
Charles~J Clopper and Egon~S Pearson.
\newblock The use of confidence or fiducial limits illustrated in the case of the binomial.
\newblock \emph{Biometrika}, 1934.

\bibitem[Cohen et~al.(2019)Cohen, Rosenfeld, and Kolter]{cohen2019certified}
Jeremy~M Cohen, Elan Rosenfeld, and J~Zico Kolter.
\newblock Certified adversarial robustness via randomized smoothing.
\newblock \emph{arXiv preprint arXiv:1902.02918}, 2019.

\bibitem[Fan et~al.(2020)Fan, Wang, Cai, and Liu]{fan2020sne}
Rui Fan, Hengli Wang, Peide Cai, and Ming Liu.
\newblock Sne-roadseg: Incorporating surface normal information into semantic segmentation for accurate freespace detection.
\newblock In \emph{ECCV}, 2020.

\bibitem[Fischer et~al.(2021)Fischer, Baader, and Vechev]{fischer2021scalable}
Marc Fischer, Maximilian Baader, and Martin Vechev.
\newblock Scalable certified segmentation via randomized smoothing.
\newblock In \emph{ICML}, 2021.

\bibitem[Goodfellow et~al.(2014)Goodfellow, Shlens, and Szegedy]{goodfellow2014explaining}
Ian~J Goodfellow, Jonathon Shlens, and Christian Szegedy.
\newblock Explaining and harnessing adversarial examples.
\newblock \emph{arXiv preprint arXiv:1412.6572}, 2014.

\bibitem[Gowal et~al.(2018)Gowal, Dvijotham, Stanforth, Bunel, Qin, Uesato, Arandjelovic, Mann, and Kohli]{gowal2018effectiveness}
Sven Gowal, Krishnamurthy Dvijotham, Robert Stanforth, Rudy Bunel, Chongli Qin, Jonathan Uesato, Relja Arandjelovic, Timothy Mann, and Pushmeet Kohli.
\newblock On the effectiveness of interval bound propagation for training verifiably robust models.
\newblock \emph{arXiv preprint arXiv:1810.12715}, 2018.

\bibitem[Guo et~al.(2019)Guo, Li, Huang, Guo, and Li]{guo2019deep}
Zhe Guo, Xiang Li, Heng Huang, Ning Guo, and Quanzheng Li.
\newblock Deep learning-based image segmentation on multimodal medical imaging.
\newblock \emph{IEEE Transactions on Radiation and Plasma Medical Sciences}, 2019.

\bibitem[Hassan et~al.(2020)Hassan, Khalil, and Ahmad]{hassan2020learning}
Ehtesham Hassan, Yasser Khalil, and Imtiaz Ahmad.
\newblock Learning feature fusion in deep learning-based object detector.
\newblock \emph{Journal of Engineering}, 2020.

\bibitem[Holm(1979)]{holm1979simple}
Sture Holm.
\newblock A simple sequentially rejective multiple test procedure.
\newblock \emph{Scandinavian journal of statistics}, 1979.

\bibitem[Hu et~al.(2020)Hu, Singh, Darrell, and Rohrbach]{hu2020iterative}
Ronghang Hu, Amanpreet Singh, Trevor Darrell, and Marcus Rohrbach.
\newblock Iterative answer prediction with pointer-augmented multimodal transformers for textvqa.
\newblock In \emph{CVPR}, 2020.

\bibitem[Jia et~al.(2020)Jia, Cao, Wang, and Gong]{jia2019certified}
Jinyuan Jia, Xiaoyu Cao, Binghui Wang, and Neil~Zhenqiang Gong.
\newblock Certified robustness for top-k predictions against adversarial perturbations via randomized smoothing.
\newblock In \emph{ICLR}, 2020.

\bibitem[Jia et~al.(2021{\natexlab{a}})Jia, Cao, and Gong]{jia2020intrinsic}
Jinyuan Jia, Xiaoyu Cao, and Neil~Zhenqiang Gong.
\newblock Intrinsic certified robustness of bagging against data poisoning attacks.
\newblock In \emph{AAAI}, 2021{\natexlab{a}}.

\bibitem[Jia et~al.(2021{\natexlab{b}})Jia, Wang, Cao, Liu, and Gong]{jia2021almost}
Jinyuan Jia, Binghui Wang, Xiaoyu Cao, Hongbin Liu, and Neil~Zhenqiang Gong.
\newblock Almost tight l0-norm certified robustness of top-k predictions against adversarial perturbations.
\newblock In \emph{ICLR}, 2021{\natexlab{b}}.

\bibitem[Jia et~al.(2022)Jia, Qu, and Gong]{jia2022multiguard}
Jinyuan Jia, Wenjie Qu, and Neil Gong.
\newblock Multiguard: Provably robust multi-label classification against adversarial examples.
\newblock \emph{Advances in Neural Information Processing Systems}, 35:\penalty0 10150--10163, 2022.

\bibitem[Katz et~al.(2017)Katz, Barrett, Dill, Julian, and Kochenderfer]{katz2017reluplex}
Guy Katz, Clark Barrett, David~L Dill, Kyle Julian, and Mykel~J Kochenderfer.
\newblock Reluplex: An efficient smt solver for verifying deep neural networks.
\newblock In \emph{Computer Aided Verification: 29th International Conference, CAV 2017, Heidelberg, Germany, July 24-28, 2017, Proceedings, Part I 30}, 2017.

\bibitem[Kazakos et~al.(2019)Kazakos, Nagrani, Zisserman, and Damen]{kazakos2019epic}
Evangelos Kazakos, Arsha Nagrani, Andrew Zisserman, and Dima Damen.
\newblock Epic-fusion: Audio-visual temporal binding for egocentric action recognition.
\newblock In \emph{ICCV}, 2019.

\bibitem[Kim and Ghosh(2019)]{kim2019single}
Taewan Kim and Joydeep Ghosh.
\newblock On single source robustness in deep fusion models.
\newblock \emph{NeurIPS}, 2019.

\bibitem[Kumar and Vepa(2020)]{kumar2020gated}
Ayush Kumar and Jithendra Vepa.
\newblock Gated mechanism for attention based multi modal sentiment analysis.
\newblock In \emph{ICASSP}. IEEE, 2020.

\bibitem[Lecuyer et~al.(2019)Lecuyer, Atlidakis, Geambasu, Hsu, and Jana]{lecuyer2019certified}
Mathias Lecuyer, Vaggelis Atlidakis, Roxana Geambasu, Daniel Hsu, and Suman Jana.
\newblock Certified robustness to adversarial examples with differential privacy.
\newblock In \emph{2019 IEEE Symposium on Security and Privacy (SP)}. IEEE, 2019.

\bibitem[Levine and Feizi(2019)]{levine2019sparse}
Alexander Levine and Soheil Feizi.
\newblock Robustness certificates for sparse adversarial attacks by randomized ablation.
\newblock \emph{CoRR}, abs/1911.09272, 2019.

\bibitem[Levine and Feizi(2020)]{levine2020robustness}
Alexander Levine and Soheil Feizi.
\newblock Robustness certificates for sparse adversarial attacks by randomized ablation.
\newblock In \emph{AAAI}, number~04, 2020.

\bibitem[Li et~al.(2022)Li, Yu, Meng, Caine, Ngiam, Peng, Shen, Lu, Zhou, Le, et~al.]{li2022deepfusion}
Yingwei Li, Adams~Wei Yu, Tianjian Meng, Ben Caine, Jiquan Ngiam, Daiyi Peng, Junyang Shen, Yifeng Lu, Denny Zhou, Quoc~V Le, et~al.
\newblock Deepfusion: Lidar-camera deep fusion for multi-modal 3d object detection.
\newblock In \emph{CVPR}, 2022.

\bibitem[Liang et~al.(2022)Liang, Wakaki, Nobuhara, and Nishino]{liang2022multimodal}
Yupeng Liang, Ryosuke Wakaki, Shohei Nobuhara, and Ko Nishino.
\newblock Multimodal material segmentation.
\newblock In \emph{CVPR}, 2022.

\bibitem[Liu et~al.(2021)Liu, Jia, and Gong]{liu2021pointguard}
Hongbin Liu, Jinyuan Jia, and Neil~Zhenqiang Gong.
\newblock Pointguard: Provably robust 3d point cloud classification.
\newblock In \emph{Proceedings of the IEEE/CVF conference on computer vision and pattern recognition}, pages 6186--6195, 2021.

\bibitem[Livingstone and Russo(2018)]{livingstone2018ryerson}
Steven~R Livingstone and Frank~A Russo.
\newblock The ryerson audio-visual database of emotional speech and song (ravdess): A dynamic, multimodal set of facial and vocal expressions in north american english.
\newblock \emph{PloS one}, 2018.

\bibitem[Madry et~al.(2017)Madry, Makelov, Schmidt, Tsipras, and Vladu]{madry2017towards}
Aleksander Madry, Aleksandar Makelov, Ludwig Schmidt, Dimitris Tsipras, and Adrian Vladu.
\newblock Towards deep learning models resistant to adversarial attacks.
\newblock \emph{arXiv}, 2017.

\bibitem[Natan and Miura(2022)]{natan2022end}
Oskar Natan and Jun Miura.
\newblock End-to-end autonomous driving with semantic depth cloud mapping and multi-agent.
\newblock \emph{IEEE Transactions on Intelligent Vehicles}, 2022.

\bibitem[Neyman and Pearson(1933)]{neyman1933ix}
Jerzy Neyman and Egon~Sharpe Pearson.
\newblock Ix. on the problem of the most efficient tests of statistical hypotheses.
\newblock \emph{Philosophical Transactions of the Royal Society of London.}, 1933.

\bibitem[Nguyen et~al.(2015)Nguyen, Yosinski, and Clune]{nguyen2015deep}
Anh Nguyen, Jason Yosinski, and Jeff Clune.
\newblock Deep neural networks are easily fooled: High confidence predictions for unrecognizable images.
\newblock In \emph{CVPR}, 2015.

\bibitem[Pei et~al.(2024)Pei, Jia, Guo, Li, and Song]{pei2023textguard}
Hengzhi Pei, Jinyuan Jia, Wenbo Guo, Bo Li, and Dawn Song.
\newblock Textguard: Provable defense against backdoor attacks on text classification.
\newblock In \emph{NDSS}, 2024.

\bibitem[Prakash et~al.(2021)Prakash, Chitta, and Geiger]{prakash2021multi}
Aditya Prakash, Kashyap Chitta, and Andreas Geiger.
\newblock Multi-modal fusion transformer for end-to-end autonomous driving.
\newblock In \emph{CVPR}, 2021.

\bibitem[Shafahi et~al.(2019)Shafahi, Najibi, Ghiasi, Xu, Dickerson, Studer, Davis, Taylor, and Goldstein]{shafahi2019adversarial}
Ali Shafahi, Mahyar Najibi, Mohammad~Amin Ghiasi, Zheng Xu, John Dickerson, Christoph Studer, Larry~S Davis, Gavin Taylor, and Tom Goldstein.
\newblock Adversarial training for free!
\newblock \emph{NeurIPS}, 2019.

\bibitem[Shah et~al.(2019)Shah, Chen, Rohrbach, and Parikh]{shah2019cycle}
Meet Shah, Xinlei Chen, Marcus Rohrbach, and Devi Parikh.
\newblock Cycle-consistency for robust visual question answering.
\newblock In \emph{CVPR}, 2019.

\bibitem[Sharif et~al.(2016)Sharif, Bhagavatula, Bauer, and Reiter]{sharif2016accessorize}
Mahmood Sharif, Sruti Bhagavatula, Lujo Bauer, and Michael~K Reiter.
\newblock Accessorize to a crime: Real and stealthy attacks on state-of-the-art face recognition.
\newblock In \emph{Proceedings of the 2016 acm sigsac conference on computer and communications security}, 2016.

\bibitem[Shen et~al.(2020)Shen, Won, Chen, and Chen]{shen2020drift}
Junjie Shen, Jun~Yeon Won, Zeyuan Chen, and Qi~Alfred Chen.
\newblock Drift with devil: Security of $\{$Multi-Sensor$\}$ fusion based localization in $\{$High-Level$\}$ autonomous driving under $\{$GPS$\}$ spoofing.
\newblock In \emph{USENIX Security}, 2020.

\bibitem[Teichmann et~al.(2018)Teichmann, Weber, Zoellner, Cipolla, and Urtasun]{teichmann2018multinet}
Marvin Teichmann, Michael Weber, Marius Zoellner, Roberto Cipolla, and Raquel Urtasun.
\newblock Multinet: Real-time joint semantic reasoning for autonomous driving.
\newblock In \emph{2018 IEEE intelligent vehicles symposium (IV)}. IEEE, 2018.

\bibitem[Tian and Xu(2021)]{tian2021can}
Yapeng Tian and Chenliang Xu.
\newblock Can audio-visual integration strengthen robustness under multimodal attacks?
\newblock In \emph{CVPR}, 2021.

\bibitem[Uesato et~al.(2018)Uesato, O’donoghue, Kohli, and Oord]{uesato2018adversarial}
Jonathan Uesato, Brendan O’donoghue, Pushmeet Kohli, and Aaron Oord.
\newblock Adversarial risk and the dangers of evaluating against weak attacks.
\newblock In \emph{ICML}, 2018.

\bibitem[Wagner et~al.(2016)Wagner, Fischer, Herman, Behnke, et~al.]{wagner2016multispectral}
J{\"o}rg Wagner, Volker Fischer, Michael Herman, Sven Behnke, et~al.
\newblock Multispectral pedestrian detection using deep fusion convolutional neural networks.
\newblock In \emph{ESANN}, 2016.

\bibitem[Wang et~al.(2021{\natexlab{a}})Wang, Ma, Zhu, and Yang]{wang2021pointaugmenting}
Chunwei Wang, Chao Ma, Ming Zhu, and Xiaokang Yang.
\newblock Pointaugmenting: Cross-modal augmentation for 3d object detection.
\newblock In \emph{CVPR}, 2021{\natexlab{a}}.

\bibitem[Wang et~al.(2021{\natexlab{b}})Wang, Tang, Lou, and Xiong]{wang2021certified}
Wenjie Wang, Pengfei Tang, Jian Lou, and Li Xiong.
\newblock Certified robustness to word substitution attack with differential privacy.
\newblock In \emph{Proceedings of the 2021 Conference of the North American Chapter of the Association for Computational Linguistics: Human Language Technologies}, 2021{\natexlab{b}}.

\bibitem[Wang et~al.(2023)Wang, Yasunaga, Ren, Wada, and Leskovec]{wang2023vqa}
Yanan Wang, Michihiro Yasunaga, Hongyu Ren, Shinya Wada, and Jure Leskovec.
\newblock Vqa-gnn: Reasoning with multimodal knowledge via graph neural networks for visual question answering.
\newblock In \emph{ICCV}, 2023.

\bibitem[Wong and Kolter(2018)]{wong2018provable}
Eric Wong and Zico Kolter.
\newblock Provable defenses against adversarial examples via the convex outer adversarial polytope.
\newblock In \emph{ICML}, 2018.

\bibitem[Wong et~al.(2020)Wong, Rice, and Kolter]{wong2020fast}
Eric Wong, Leslie Rice, and J~Zico Kolter.
\newblock Fast is better than free: Revisiting adversarial training.
\newblock \emph{arXiv preprint arXiv:2001.03994}, 2020.

\bibitem[Xiang et~al.(2022)Xiang, Mahloujifar, and Mittal]{xiang2022patchcleanser}
Chong Xiang, Saeed Mahloujifar, and Prateek Mittal.
\newblock $\{$PatchCleanser$\}$: Certifiably robust defense against adversarial patches for any image classifier.
\newblock In \emph{USENIX Security}, 2022.

\bibitem[Xu et~al.(2018)Xu, Chen, Liu, Rohrbach, Darrell, and Song]{xu2018fooling}
Xiaojun Xu, Xinyun Chen, Chang Liu, Anna Rohrbach, Trevor Darrell, and Dawn Song.
\newblock Fooling vision and language models despite localization and attention mechanism.
\newblock In \emph{CVPR}, 2018.

\bibitem[Yang et~al.(2023)Yang, Wang, Jia, et~al.]{yang2023graphguard}
Han Yang, Binghui Wang, Jinyuan Jia, et~al.
\newblock Graphguard: Provably robust graph classification against adversarial attacks.
\newblock In \emph{The Twelfth International Conference on Learning Representations}, 2023.

\bibitem[Yang et~al.(2021)Yang, Lin, Barman, Condessa, and Kolter]{yang2021defending}
Karren Yang, Wan-Yi Lin, Manash Barman, Filipe Condessa, and Zico Kolter.
\newblock Defending multimodal fusion models against single-source adversaries.
\newblock In \emph{CVPR}, 2021.

\bibitem[Ye et~al.(2020)Ye, Gong, and Liu]{ye2020safer}
Mao Ye, Chengyue Gong, and Qiang Liu.
\newblock Safer: A structure-free approach for certified robustness to adversarial word substitutions.
\newblock \emph{arXiv}, 2020.

\bibitem[Zadeh et~al.(2016)Zadeh, Zellers, Pincus, and Morency]{zadeh2016mosi}
Amir Zadeh, Rowan Zellers, Eli Pincus, and Louis-Philippe Morency.
\newblock Mosi: multimodal corpus of sentiment intensity and subjectivity analysis in online opinion videos.
\newblock \emph{arXiv preprint arXiv:1606.06259}, 2016.

\bibitem[Zadeh et~al.(2018)Zadeh, Liang, Poria, Cambria, and Morency]{zadeh2018multimodal}
AmirAli~Bagher Zadeh, Paul~Pu Liang, Soujanya Poria, Erik Cambria, and Louis-Philippe Morency.
\newblock Multimodal language analysis in the wild: Cmu-mosei dataset and interpretable dynamic fusion graph.
\newblock In \emph{Proceedings of the 56th Annual Meeting of the Association for Computational Linguistics (Volume 1: Long Papers)}, 2018.

\bibitem[Zeng et~al.(2023)Zeng, Xu, Zheng, and Huang]{zeng2023certified}
Jiehang Zeng, Jianhan Xu, Xiaoqing Zheng, and Xuanjing Huang.
\newblock Certified robustness to text adversarial attacks by randomized [mask].
\newblock \emph{Computational Linguistics}, 2023.

\bibitem[Zhang et~al.(2022)Zhang, Yi, and Sang]{zhang2022towards}
Jiaming Zhang, Qi Yi, and Jitao Sang.
\newblock Towards adversarial attack on vision-language pre-training models.
\newblock In \emph{Proceedings of the 30th ACM International Conference on Multimedia}, 2022.

\bibitem[Zhang et~al.(2023)Zhang, Jia, Liu, and Gong]{zhang2023pointcert}
Jinghuai Zhang, Jinyuan Jia, Hongbin Liu, and Neil~Zhenqiang Gong.
\newblock Pointcert: Point cloud classification with deterministic certified robustness guarantees.
\newblock In \emph{CVPR}, 2023.

\bibitem[Zhao et~al.(2021)Zhao, Liu, and Zhou]{zhao2021robust}
Zengqun Zhao, Qingshan Liu, and Feng Zhou.
\newblock Robust lightweight facial expression recognition network with label distribution training.
\newblock In \emph{AAAI}, 2021.

\end{thebibliography}
}

\appendix

\newpage
\onecolumn

\section{Proof of Theorem~\ref{theorem_of_certified_radius_classification}}
\label{proof_of_theorem_1}
Our proof is extended from previous studies~\cite{jia2020intrinsic}.
We first specify notations and then show our proof. 
Given original multi-modal input pair $\mathbf{M} = (\mathbf{m}_1,\mathbf{m}_2,\ldots, \mathbf{m}_T)$ and attacked input pair $(\mathbf{m}'_1,\mathbf{m}'_2,\ldots, \mathbf{m}'_T)$, we respectively use $\mathcal{X}$ and $\mathcal{Y}$ to denote the ablated multi-modal input sampled from them without replacement. We use $e_i$ to denote the number of basic elements (e.g., pixels) that are in both $\mathbf{m}_i$ and $\mathbf{m}'_i$, i.e., $e_i = |\mathbf{m}_i\cap \mathbf{m}'_i|$. Moreover, we use $ \Upsilon$ to denote the joint space between $\mathcal{X}$ and $\mathcal{Y}$.  We use $\mathcal{E} = (\mathcal{E}_1,\mathcal{E}_2, \ldots, \mathcal{E}_T)$ to denote a variable in the space $ \Upsilon$. 

We divide the space $ \Upsilon$ into the following subspace:
\begin{align}
    &\tilde{B} = \{\mathcal{E}| \mathcal{E}_1 \subseteq (\mathbf{m}_{1}\cap \mathbf{m}'_1),\mathcal{E}_2 \subseteq (\mathbf{m}_{2}\cap \mathbf{m}'_2),\ldots, \mathcal{E}_T \subseteq (\mathbf{m}_{T}\cap \mathbf{m}'_T)\}, \\  
    &\tilde{A} = \{\mathcal{E}|\mathcal{E}_1 \subseteq \mathbf{m}_{1},\mathcal{E}_2 \subseteq \mathbf{m}_{2}, \ldots, \mathcal{E}_T \subseteq \mathbf{m}_{T}\} - \tilde{B},\\
    &\tilde{C} = \{\mathcal{E}|\mathcal{E}_1 \subseteq \mathbf{m}'_{1},\mathcal{E}_2 \subseteq \mathbf{m}'_{2}, \ldots, \mathcal{E}_T \subseteq \mathbf{m}'_{T}\} - \tilde{B}.
\end{align}
We present Neyman Pearson Lemma~\cite{neyman1933ix,cohen2019certified,jia2020intrinsic} for later use.
\begin{lemma}[Neyman Pearson]
\label{lemma_np_general}
Let $\mathcal{X}$, $\mathcal{Y}$ be two random variables whose probability densities are respectively $\text{Pr}(\mathcal{X}=\mathcal{E})$ and $\text{Pr}(\mathcal{Y}=\mathcal{E})$, where $\mathcal{E}\in  \Upsilon$. Let $Z$ be a random or deterministic functions. 
where $Z(1|\mathcal{E})$ denotes the probability that $Z(\mathcal{E})=1$. 
Then, we have the following:  

(1) If $W_1=\{\mathcal{E}\in  \Upsilon: \text{Pr}(\mathcal{Y}=\mathcal{E})/\text{Pr}(\mathcal{X}=\mathcal{E}) < \mu  \}$ and $W_2=\{\mathcal{E}\in  \Upsilon: \text{Pr}(\mathcal{Y}=\mathcal{E})/\text{Pr}(\mathcal{X}=\mathcal{E}) = \mu  \}$ for some $\mu > 0$. Let $S = W_1 \cup W_3$, where $W_3 \subseteq W_2$. If $\text{Pr}(Z(\mathcal{X})=1)\geq \text{Pr}(\mathcal{X}\in S)$, then $\text{Pr}(Z(\mathcal{Y})=1)\geq \text{Pr}(\mathcal{Y}\in S)$. 

(2) If $W_1=\{\mathcal{E}\in  \Upsilon: \text{Pr}(\mathcal{Y}=\mathcal{E})/\text{Pr}(\mathcal{X}=\mathcal{E}) > \mu  \}$ and $W_2=\{\mathcal{E}\in  \Upsilon: \text{Pr}(\mathcal{Y}=\mathcal{E})/\text{Pr}(\mathcal{X}=\mathcal{E}) = \mu  \}$ for some $\mu > 0$. Let $S = W_1 \cup W_3$, where $W_3 \subseteq W_2$. If $\text{Pr}(Z(\mathcal{X})=1)\leq \text{Pr}(\mathcal{X}\in S)$, then $\text{Pr}(Z(\mathcal{Y})=1)\leq \text{Pr}(\mathcal{Y}\in S)$. 
\end{lemma}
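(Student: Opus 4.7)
The plan is to execute the classical pointwise likelihood-ratio argument for the Neyman--Pearson lemma, adapted to the randomized-test setting used here. I abbreviate $f(\mathcal{E}):=Z(1|\mathcal{E})\in[0,1]$ and $g(\mathcal{E}):=\mathbb{I}(\mathcal{E}\in S)\in\{0,1\}$, so that $\text{Pr}(Z(\mathcal{X})=1)=\sum_{\mathcal{E}}f(\mathcal{E})\,\text{Pr}(\mathcal{X}=\mathcal{E})$ and $\text{Pr}(\mathcal{X}\in S)=\sum_{\mathcal{E}}g(\mathcal{E})\,\text{Pr}(\mathcal{X}=\mathcal{E})$, with analogous identities for $\mathcal{Y}$. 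Both parts of the lemma then reduce to showing that the sum $\sum_\mathcal{E}(f(\mathcal{E})-g(\mathcal{E}))\,\text{Pr}(\mathcal{Y}=\mathcal{E})$ inherits the sign of the corresponding sum against $\text{Pr}(\mathcal{X}=\mathcal{E})$.

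The heart of the argument is a sign-coupled pointwise comparison. For part~(1), I split $\Upsilon$ into $S$ and its complement. On $S=W_1\cup W_3$ the indicator is $g=1$ while $f\leq 1$, so $f-g\leq 0$; by construction of $S$ the likelihood ratio is at most $\mu$, i.e., $\text{Pr}(\mathcal{Y}=\mathcal{E})\leq\mu\,\text{Pr}(\mathcal{X}=\mathcal{E})$, and multiplying a non-positive number by a smaller non-negative weight gives a larger product. On $\Upsilon\setminus S$, which consists of $W_2\setminus W_3$ (ratio exactly $\mu$) together with the strictly greater-than-$\mu$ region (including points where $\text{Pr}(\mathcal{X}=\mathcal{E})=0$ and $\text{Pr}(\mathcal{Y}=\mathcal{E})>0$, where the bound holds trivially), we have $g=0$ and $f\geq 0$, so $f-g\geq 0$ and $\text{Pr}(\mathcal{Y}=\mathcal{E})\geq\mu\,\text{Pr}(\mathcal{X}=\mathcal{E})$, and again multiplying preserves the direction. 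Both cases yield the uniform pointwise bound
\begin{equation*}
(f(\mathcal{E})-g(\mathcal{E}))\,\text{Pr}(\mathcal{Y}=\mathcal{E})\;\geq\;\mu\,(f(\mathcal{E})-g(\mathcal{E}))\,\text{Pr}(\mathcal{X}=\mathcal{E}).
\end{equation*}

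Summing over $\Upsilon$ and using $\mu>0$ together with the hypothesis $\sum_\mathcal{E}(f-g)\text{Pr}(\mathcal{X}=\mathcal{E})\geq 0$ gives $\sum_\mathcal{E}(f-g)\text{Pr}(\mathcal{Y}=\mathcal{E})\geq 0$, which rearranges to the conclusion of part~(1). Part~(2) follows by the fully symmetric argument: with $S$ now on the high-ratio side, on $S$ we have $f-g\leq 0$ and $\text{Pr}(\mathcal{Y}=\mathcal{E})\geq\mu\,\text{Pr}(\mathcal{X}=\mathcal{E})$, while on the complement $f-g\geq 0$ and $\text{Pr}(\mathcal{Y}=\mathcal{E})\leq\mu\,\text{Pr}(\mathcal{X}=\mathcal{E})$, so both inequalities flip and yield $(f-g)\text{Pr}(\mathcal{Y}=\mathcal{E})\leq\mu(f-g)\text{Pr}(\mathcal{X}=\mathcal{E})$, summing to a non-positive total under the dual hypothesis. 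The step I expect to be the main obstacle is the bookkeeping around the equality-ratio set $W_2$: the freedom to choose any $W_3\subseteq W_2$ is exactly what lets the downstream application absorb the atom-level rounding captured by $\delta_l$ and $\delta_u$, and one must carefully verify that on $W_2\setminus W_3$ both inequalities $\text{Pr}(\mathcal{Y}=\mathcal{E})\leq\mu\,\text{Pr}(\mathcal{X}=\mathcal{E})$ and $\text{Pr}(\mathcal{Y}=\mathcal{E})\geq\mu\,\text{Pr}(\mathcal{X}=\mathcal{E})$ hold as equalities, so no slack is introduced on either side of the partition.
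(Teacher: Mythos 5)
Your proof is correct and is essentially the same argument as the paper's: the paper splits $\text{Pr}(Z(\mathcal{Y})=1)-\text{Pr}(\mathcal{Y}\in S)$ into integrals over $S$ and $S^c$ and applies the likelihood-ratio bound $\text{Pr}(\mathcal{Y}=\mathcal{E})\lessgtr\mu\,\text{Pr}(\mathcal{X}=\mathcal{E})$ on each piece, which is exactly your pointwise inequality $(f-g)\,\text{Pr}(\mathcal{Y}=\mathcal{E})\geq\mu\,(f-g)\,\text{Pr}(\mathcal{X}=\mathcal{E})$ integrated over $\Upsilon$. Your explicit handling of the zero-density points in $S^c$ and of $W_2\setminus W_3$ is a small bonus in care, not a difference in method.
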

\begin{proof}
Let's start by proving part (1). For convenience, we denote the complement of $S$ as $S^{c}$. With this notation, we have the following: 
\begin{align}
  & \text{Pr}(Z(\mathcal{Y})=1)- \text{Pr}(\mathcal{Y}\in S) \\
=&\int_{ \Upsilon}Z(1|\mathcal{E})\cdot \text{Pr}(\mathcal{Y}=\mathcal{E})d\mathcal{E} - \int_{S}\text{Pr}(\mathcal{Y}=\mathcal{E})d\mathcal{E} \\
=&\int_{S^{c}}Z(1|\mathcal{E})\cdot \text{Pr}(\mathcal{Y}=\mathcal{E})d\mathcal{E} + 
\int_{S}Z(1|\mathcal{E})\cdot \text{Pr}(\mathcal{Y}=\mathcal{E})d\mathcal{E} - \int_{S}\text{Pr}(\mathcal{Y}=\mathcal{E})d\mathcal{E} \\
=&\int_{S^{c}}Z(1|\mathcal{E})\cdot \text{Pr}(\mathcal{Y}=\mathcal{E})d\mathcal{E} 
\label{lemma_np_general_e1}
- \int_{S}(1-Z(1|\mathcal{E}))\cdot\text{Pr}(\mathcal{Y}=\mathcal{E})d\mathcal{E} \\
\label{lemma_np_general_e2}
\geq&\mu \cdot[\int_{S^{c}}Z(1|\mathcal{E})\cdot \text{Pr}(\mathcal{X}=\mathcal{E})d\mathcal{E} 
- \int_{S}(1-Z(1|\mathcal{E}))\cdot\text{Pr}(\mathcal{X}=\mathcal{E})d\mathcal{E}] \\
=&\mu \cdot[\int_{S^{c}}Z(1|\mathcal{E})\cdot \text{Pr}(\mathcal{X}=\mathcal{E})d\mathcal{E} + 
\int_{S}Z(1|\mathcal{E})\cdot \text{Pr}(\mathcal{X}=\mathcal{E})d\mathcal{E} - \int_{S}\text{Pr}(\mathcal{X}=\mathcal{E})d\mathcal{E}] \\
=&\mu \cdot[\int_{ \Upsilon}Z(1|\mathcal{E})\cdot \text{Pr}(\mathcal{X}=\mathcal{E})d\mathcal{E} - \int_{S}\text{Pr}(\mathcal{X}=\mathcal{E})d\mathcal{E}] \\
=&\mu \cdot [\text{Pr}(Z(\mathcal{X})=1)- \text{Pr}(\mathcal{X}\in S)] \\
\geq & 0.
\end{align}
Equation~\ref{lemma_np_general_e2} is derived from~\ref{lemma_np_general_e1} due to the fact that $\text{Pr}(\mathcal{Y}=\mathcal{E})/\text{Pr}(\mathcal{X}=\mathcal{E}) \leq \mu, \forall \mathcal{E}\in S$, $\text{Pr}(\mathcal{Y}=\mathcal{E})/\text{Pr}(\mathcal{X}=\mathcal{E}) \geq \mu, \forall \mathcal{E}\in S^{c}$, and  $1-Z(1|\mathcal{E}) \geq 0$. 
Similarly, we can establish the proof for part (2), but we have omitted the detailed steps for the sake of conciseness.
\end{proof}

For simplicity, we use $n_i$ and $n_i'$ to denote the number of basic elements (e.g., pixels) in $\mathbf{m}_i$ and $\mathbf{m}'_i$ respectively, i.e., $n_i=|\mathbf{m}_i|$ and $n_i' = |\mathbf{m}_i'|$.
Then, we have the following probability mass function:
\begin{align}
&\text{Pr}(\mathcal{X} = \mathcal{E}) = 
\begin{cases}
 \frac{1}{\prod_{i=1}^T{n_i \choose k_i}}, &\text{ if } \mathcal{E} \in \tilde{A}  \cup \tilde{B}, \\
 0, &\text{ otherwise}.
\end{cases} \\
&\text{Pr}(\mathcal{Y} = \mathcal{E}) = 
\begin{cases}
 \frac{1}{\prod_{i=1}^T{n'_i \choose k_i}}, &\text{ if } \mathcal{E} \in \tilde{B}  \cup \tilde{C}, \\
 0, &\text{ otherwise}.
\end{cases}
\end{align}

Recall that we have $e_i=|\mathbf{m}'_{i}\cap \mathbf{m}_{i}|$ for $i = 1,2, \ldots, T$, so the probability of $\mathcal{X}$ and $\mathcal{Y}$ in $\tilde{A}$, $\tilde{B}$ and $\tilde{C}$ can be computed as follows:
\begin{align}
   & \text{Pr}(\mathcal{X} \in \tilde{A}) =1- \frac{\prod_{i=1}^T{e_i \choose k_i}}{\prod_{i=1}^T{n_i \choose k_i}}, \text{Pr}(\mathcal{X} \in \tilde{B}) = \frac{\prod_{i=1}^T{e_i \choose k_i}}{\prod_{i=1}^T{n_i \choose k_i}}, \text{Pr}(\mathcal{X} \in \tilde{C}) = 0; \\
   & \text{Pr}(\mathcal{Y} \in \tilde{A}) =0, \text{Pr}(\mathcal{Y} \in \tilde{B}) = \frac{\prod_{i=1}^T{e_i \choose k_i}}{\prod_{i=1}^T{n'_i \choose k_i}}, \text{Pr}(\mathcal{Y} \in \tilde{C}) = 1- \frac{\prod_{i=1}^T{e_i \choose k_i}}{\prod_{i=1}^T{n'_i \choose k_i}}.
\end{align}

We first define $\delta_{l}=\underline{\Pr}(g(\mathcal{X})=A) - \frac{\lfloor\underline{\Pr}(g(\mathcal{X})=A)\prod_{i=1}^T{n_i \choose k_i}\rfloor}{\prod_{i=1}^T{n_i \choose k_i}} $ to help rounding $\underline{\Pr}(g(\mathcal{X})=A)$. Then we can construct a set $S  = \tilde{A} + \tilde{B}'$, where $\tilde{B}'\subseteq \tilde{B}$ and $\Pr(\mathcal{X}\in \tilde{B}') = \underline{\Pr}(g(\mathcal{X}) = A)-\delta_{l}-\Pr(\mathcal{X}\in \tilde{A})$. We can assume $\underline{\Pr}(g(\mathcal{X}) = A) >\Pr(\mathcal{X}\in \tilde{A})$ because otherwise $\Pr(g(\mathcal{Y}) = A)$ is bounded by 0. Then we have ${\Pr}(g(\mathcal{X}) = A)\geq\Pr(\mathcal{X}\in S)$. So we have the following lower bound on $\Pr(g(\mathcal{Y}) = A)$:

\begin{align}
&\Pr(g(\mathcal{Y}) = A)\\
\geq &\Pr(\mathcal{Y} \in S)\\
\geq &\Pr(\mathcal{Y} \in \tilde{B}')\\
\geq &\Pr(\mathcal{X} \in \tilde{B}')\frac{\Pr(\mathcal{Y} \in \tilde{B}')}{\Pr(\mathcal{X} \in \tilde{B}')}\\
\geq &\frac{\prod_{i=1}^T{n_i \choose k_i}}{\prod_{i=1}^T{n'_i \choose k_i}}(\underline{\Pr}(g(\mathcal{X}) = A)-\delta_{l}-1+ \frac{\prod_{i=1}^T{e_i \choose k_i}}{\prod_{i=1}^T{n_i \choose k_i}})
\end{align}

Similarly we define $\delta_{u}=\frac{\lceil\overline{\Pr}(g(\mathcal{X})=B)\prod_{i=1}^T{n_i \choose k_i}\rceil}{\prod_{i=1}^T{n_i\choose k_i}} -\overline{\Pr}(g(\mathcal{X})=B) $, so we can construct a set $S  = \tilde{B}'+\tilde{C}$, where $\tilde{B}'\subseteq \tilde{B}$ and $\Pr(\mathcal{X}\in \tilde{B}') = \overline{\Pr}(g(\mathcal{X}) = B)+ \delta_{u}-\Pr(\mathcal{X}\in \tilde{C})$. Then we have ${\Pr}(g(\mathcal{X}) = B)\leq\Pr(\mathcal{X}\in S)$. So we have the following upper bound on $\Pr(g(\mathcal{Y}) = B)$:

\begin{align}
&\Pr(g(\mathcal{Y}) = B)\\
\leq &\Pr(\mathcal{Y} \in S)\\
\leq &\Pr(\mathcal{Y} \in \tilde{B}')+\Pr(\mathcal{Y} \in \tilde{C})\\
\leq &\Pr(\mathcal{X} \in \tilde{B}')\frac{\Pr(\mathcal{Y} \in \tilde{B}')}{\Pr(\mathcal{X} \in \tilde{B}')}+\Pr(\mathcal{Y} \in \tilde{C})\\
\leq &\frac{\prod_{i=1}^T{n_i \choose k_i}}{\prod_{i=1}^T{n'_i \choose k_i}}(\overline{\Pr}(g(\mathcal{X}) = B)+\delta_{u})+\Pr(\mathcal{Y} \in \tilde{C})\\
\leq &\frac{\prod_{i = 1}^T{n_i \choose k_i}}{\prod_{i=1}^T{n'_i\choose k_i}}(\overline{\Pr}(g(\mathcal{X}) = B)+\delta_{u})+1- \frac{\prod_{i = 1}^T{e_i \choose k_i}}{\prod_{i = 1}^T{n'_i \choose k_i}}
\end{align}
To certify a test sample, we just need to enforce $\Pr(g(\mathcal{Y}) = A)>\Pr(g(\mathcal{Y}) = B)$. So we get Theorem~\ref{theorem_of_certified_radius_classification}.

\section{Details About the Datasets}\label{appendix-datasets}
We use two benchmark datasets for evaluation.
\begin{itemize}
\item \myparatight{RAVDESS} We use RAVDESS dataset~\cite{livingstone2018ryerson}
for the multi-modal emotion recognition task. This dataset contains video recordings of
24 participants, each speaking with a variety of emotions. The goal is to classify these emotions into one of seven categories: calm, happy,
sad, angry, fearful, surprise, and disgust. For each participant, there are 60 distinct video sequences. For data pre-processing, we follow previous work~\cite{chumachenko2022self, multimodal-emotion-recognition} and crop or zero-pad these videos to
3.6 seconds, which is the average video length. After pre-processing, each data sample contains 108 image frames and 79380 audio frames. We assume that the attacker can arbitrarily modify $r_1$ image frames (from 108 image frames of visual input) and $r_2$ audio frames (from 79380 audio frames of audio input). We divide  the data into training,
validation and test sets ensuring that the identities of actors
are not repeated across sets. Particularly, we used four actors
for testing, four for validation, and the remaining 16 for training. 

\item \myparatight{KITTI Road} For the multi-modal road segmentation task, we use KITTI Road Dataset~\cite{KITTI-Road}, which contains 289 training and 290 test samples across three distinct road scene categories. Notably, the initial release~\cite{KITTI-Road} lacks ground-truth labels for its test samples. As a result, we divided the original training dataset into 231 data samples (80\% of the data samples) for training and 58 data samples (20\% of the data samples) for testing. Each data sample consists of a RGB image, a depth image, and the ground truth segmentation. We assume that the attacker can arbitrarily modify $r_1$ pixels from the RGB image and $r_2$ pixels from the depth image for each testing input.
\end{itemize}
\section{Special Cases in Multi-modal Segmentation}\label{appendix-segmentation}
For segmentation tasks, the multi-modal model outputs the segmentation result for one of the input modalities $\mathbf{m}_o$ with $n_o$ basic elements, which can be pixels or 3-D points. Then the output contains $n_o$ labels. Previously, we consider the case where the attacker perform modification attacks to $\mathbf{m}_o$, where we have $\mathbf{m}_o = n_o = n'_o = |\mathbf{m}'_o|$. However, deletion and addition attacks on $\mathbf{m}_o$ are also possible if $\mathbf{m}_o$ represents a point cloud. If that is the case, the process of deriving Certified Pixel Accuracy, Certified F-score and Certified IoU can be different. 

First, we think of the multi-modal segmentation model before the attack (denoted by $G$) as composed of multiple classifiers denoted by $G_1,G_2,\ldots, G_{n_o}$. Each classifier $G_j$ predicts a label $G_j(\mathbf{M})$ for $m^j_o$ (the $j$th basic element of $\mathbf{m}_o$). The ground truth $y$ also includes $n_o$ labels, denoted by $y_1,y_2,\ldots, y_{n_o}$. We use $G_j(\mathbf{M})$ to denote the predicted label for $m^j_o$ before the attack and use $G_j(\mathbf{M}')$ to denote the predicted label for $m^j_o$ after the attack. We say a basic element (e.g., a pixel) $m^j_o$ is \emph{certifiably stable} if 
\begin{align}
    &G_j(\mathbf{M}) = G_j(\mathbf{M}')
    , \forall \mathbf{M}' \in \mathcal{S}(\mathbf{M}, \mathbf{R}), \text{and } m^j_o \in \mathbf{m}_o \cap \mathbf{m}'_o,\end{align}
which means $j$th basic element of $\mathbf{m}_o$ is also in $\mathbf{m}'_o$ and the predicted label for it is unchanged by the attack. If it also holds that $G_j(\mathbf{M}) = y_j$, then we term $m^j_o$ as \emph{certifiably robust}.

Then we derive Certified Pixel Accuracy (or F-score or IoU) for deletion and addition attacks on $\mathbf{m}_o$. We use $j \in [n_o]$ to denote the index of a basic element of the input modality $\mathbf{m}_o$. For each label, we define: $$TP = |\{j: (G_j(\mathbf{M})=y_j = 1) \wedge IsStable(\mathbf{M},j)\}|,$$ $$TN = |\{j: (G_j(\mathbf{M})=y_j = 0) \wedge IsStable(\mathbf{M},j)\}|,$$  $$FP = |\{j: G_j(\mathbf{M}) = 1\}| - TP, \text{and}$$ $$FN = |\{j: G_j(\mathbf{M}) = 0\}| - TN,$$ where 1 indicates that this basic element has been identified as belonging to this label, while label 0 signifies the opposite. $IsStable(\mathbf{M},j)$ is true if and only if the $j$th basic element of $\mathbf{m}_o$ is certifiably stable as defined above. We use $r_o$ denote the added (or deleted) basic elements for $\mathbf{m}_o$. Then for addition attacks to $\mathbf{m}_o$, the worst case is that all added basic elements are not certifiably robust, so we have $\text{Certified Pixel Accuracy} = \frac{TP+TN}{TP+TN+FP+FN+r_o}$,
       $\text{Certified F-score} = \frac{2TP^2}{2TP^2+TP(FP+FN+r_o)}$
, and
       $\text{Certified IoU} = \frac{TP}{TP+FP+FN+r_o}$. And for deletion attacks to $\mathbf{m}_o$, the worst case is that all deleted basic elements are certifiably robust, so we have $\text{Certified Pixel Accuracy} = \frac{TP+TN-r_o}{TP+TN+FP+FN-r_o}$,
       $\text{Certified F-score} = \frac{2(TP-r_o)^2}{2(TP-r_o)^2+(TP-r_o)(FP+FN)}$
, and
       $\text{Certified IoU} = \frac{TP-r_o}{TP+FP+FN-r_o}$. 
To obtain the final metrics, we compute the average of these values across all test samples and all labels.
\begin{figure*}
\centering
\subfloat[$r_1 = 2r_2$]{\includegraphics[width=0.3\textwidth]{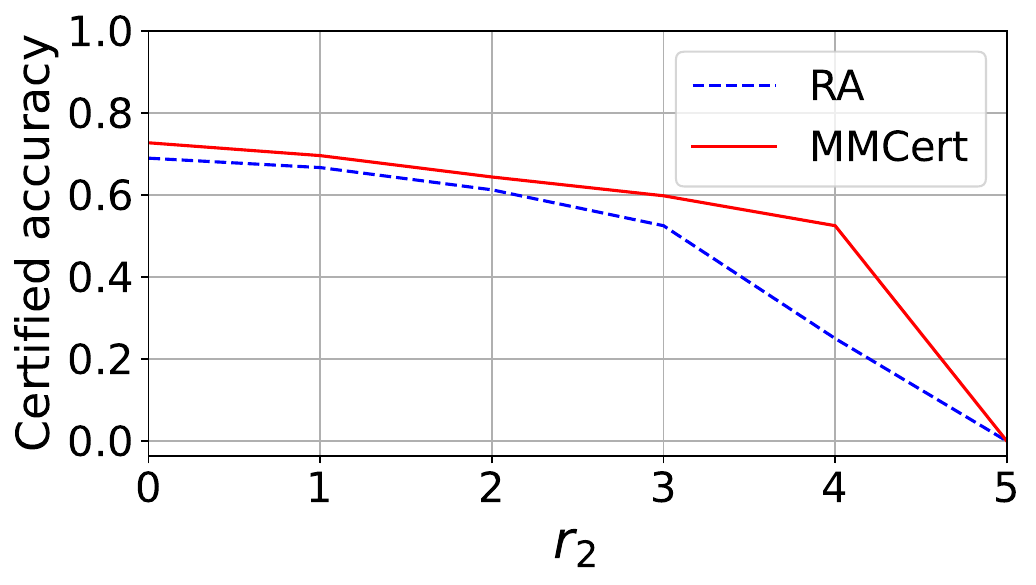}}\vspace{1mm}
\subfloat[$r_1 = 3r_2$]{\includegraphics[width=0.3\textwidth]{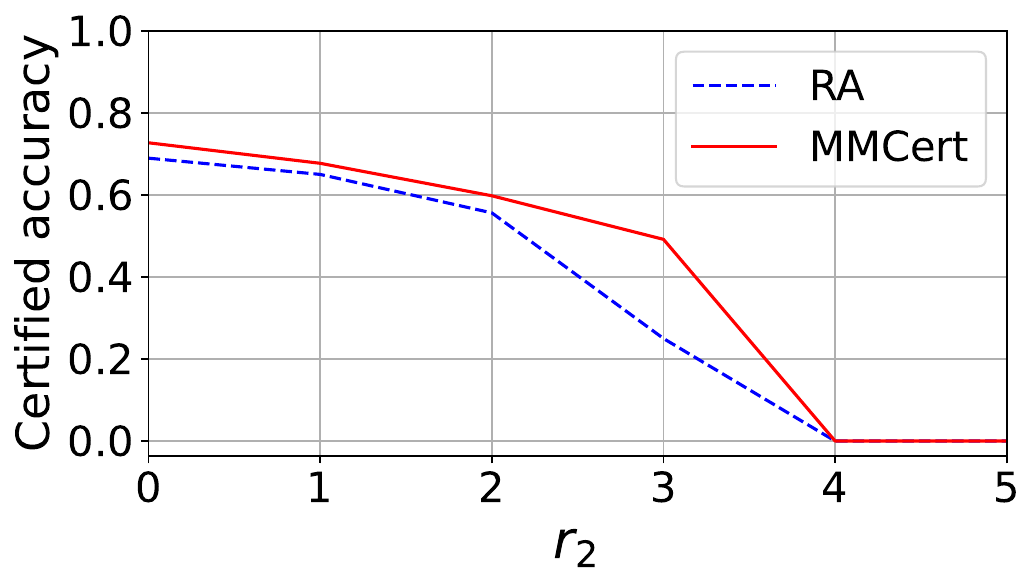}}\vspace{1mm}
\subfloat[$r_1 = 4r_2$]{\includegraphics[width=0.3\textwidth]{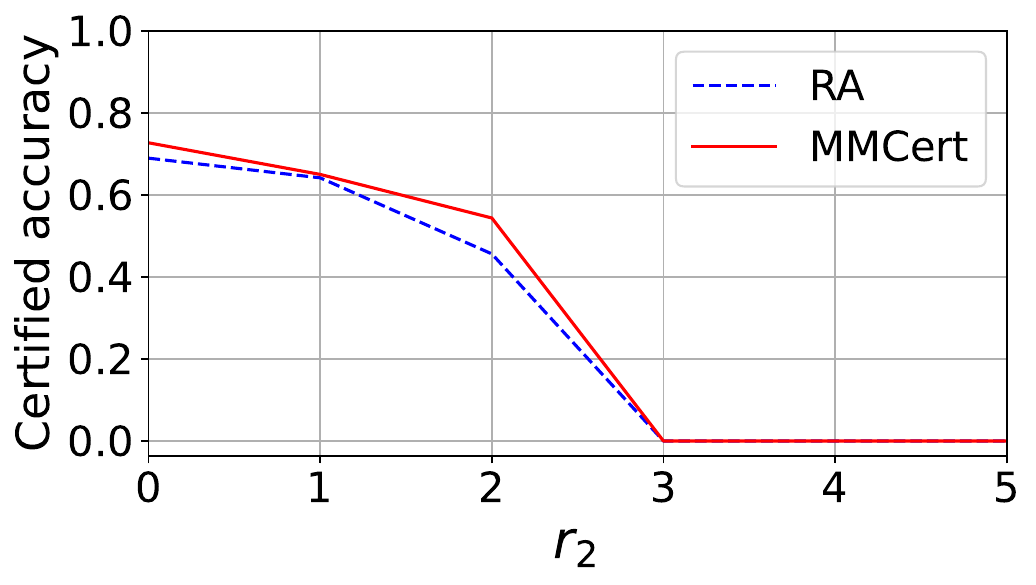}}\vspace{1mm}
\vspace{-4mm}
\caption{Compare our {\name} with randomized ablation on RAVDESS Dataset.
}
\label{exp-r1geqr2-ravdess}

\end{figure*}

\begin{figure*}
\centering

{\includegraphics[width=0.3\textwidth]{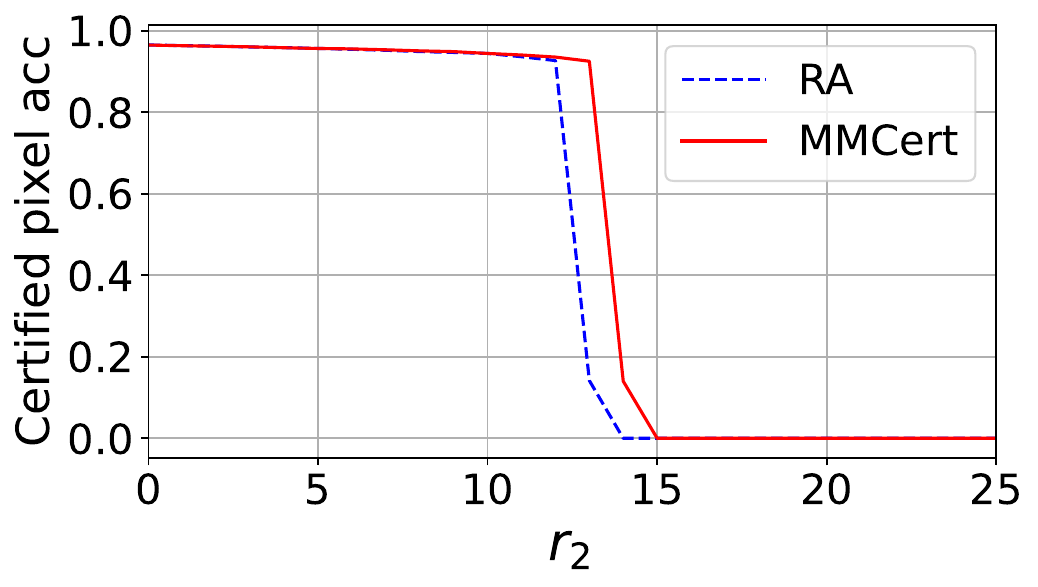}}\vspace{1mm}
{\includegraphics[width=0.3\textwidth]{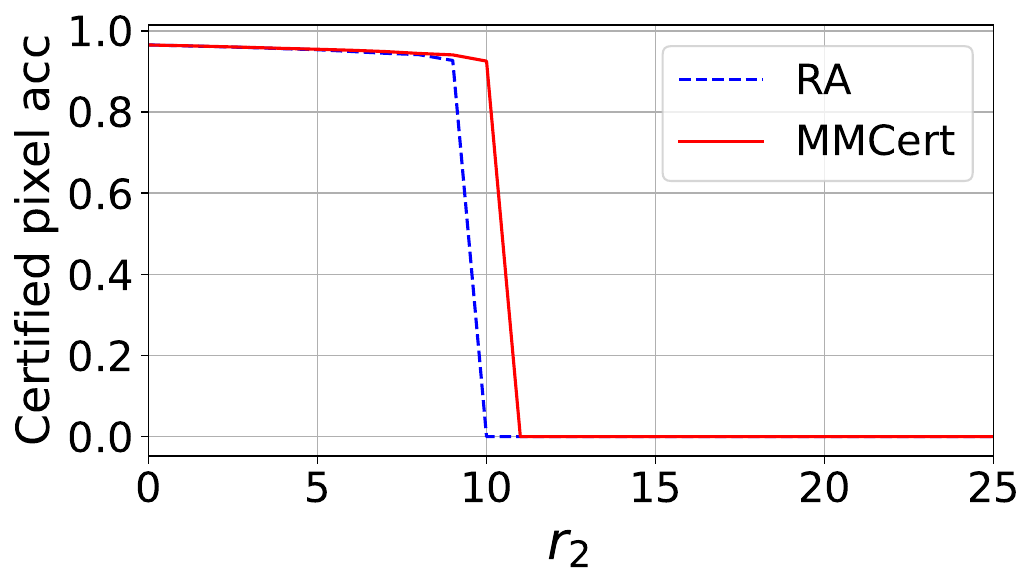}}\vspace{1mm}
{\includegraphics[width=0.3\textwidth]{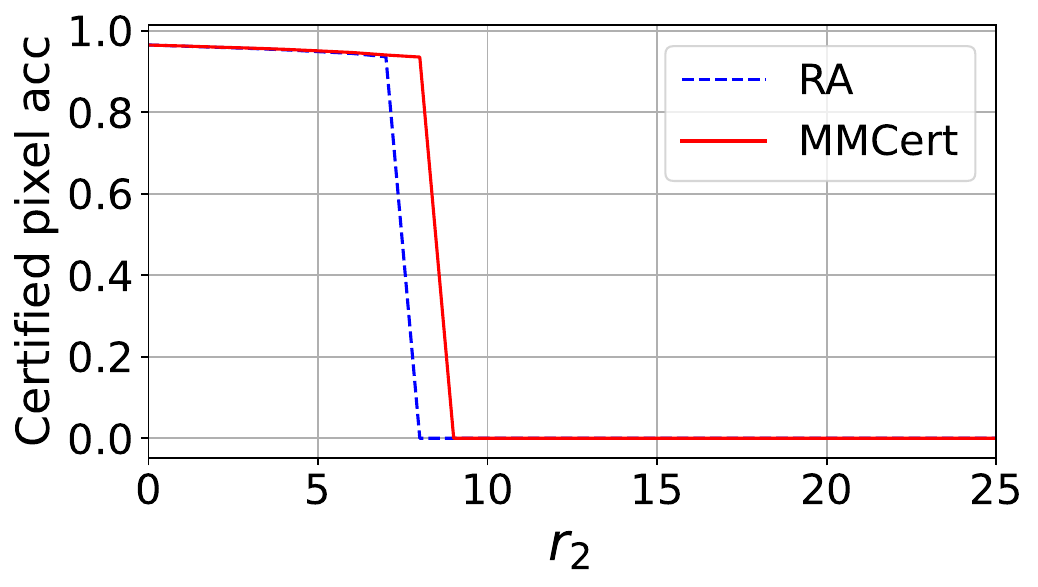}}\vspace{1mm}
{\includegraphics[width=0.3\textwidth]{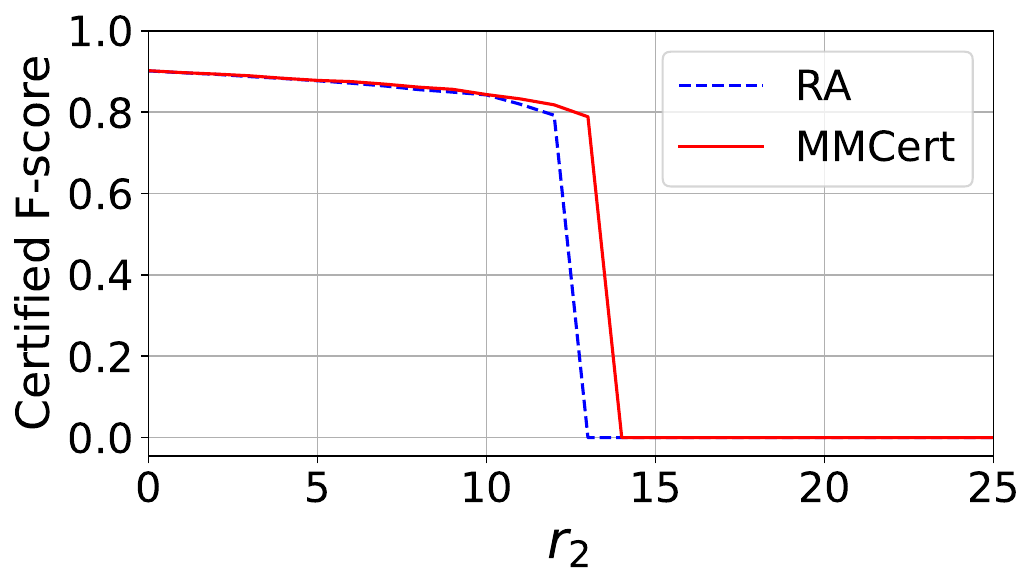}}\vspace{1mm}
{\includegraphics[width=0.3\textwidth]{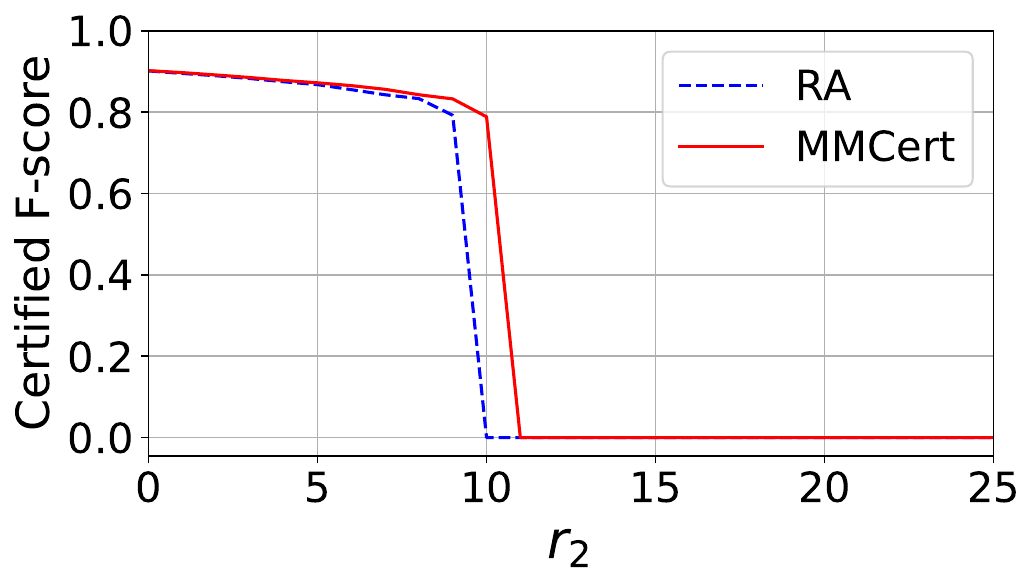}}\vspace{1mm}
{\includegraphics[width=0.3\textwidth]{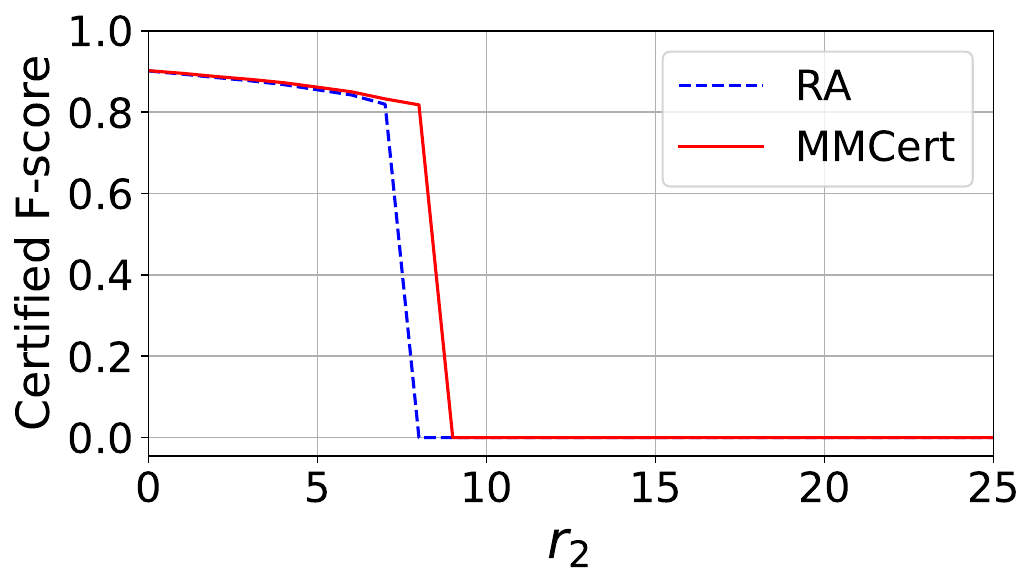}}\vspace{1mm}
\subfloat[$r_1 = 2r_2$]{\includegraphics[width=0.3\textwidth]{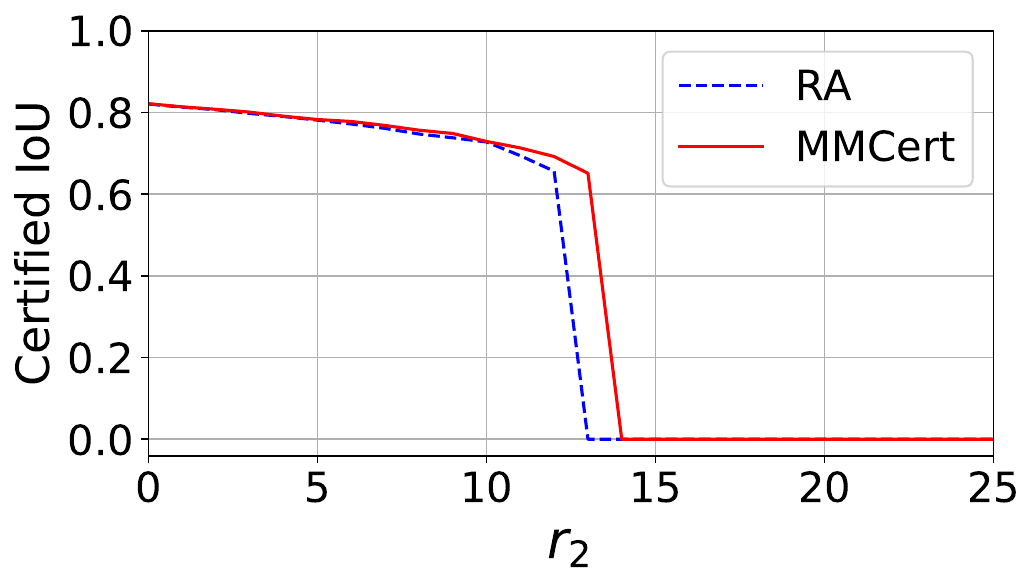}}\vspace{1mm}
\subfloat[$r_1 = 3r_2$]{\includegraphics[width=0.3\textwidth]{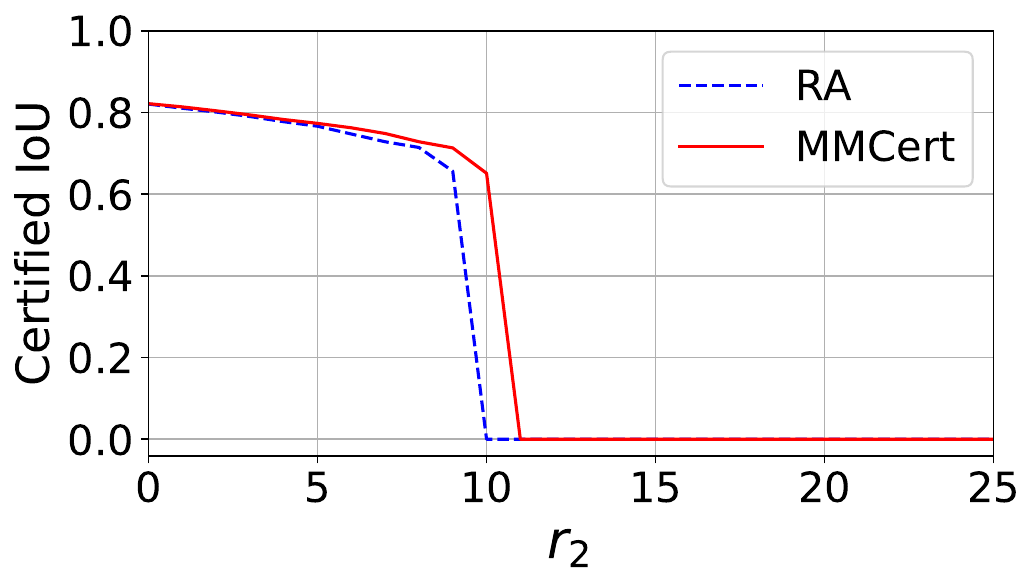}}\vspace{1mm}
\subfloat[$r_1 = 4r_2$]{\includegraphics[width=0.3\textwidth]{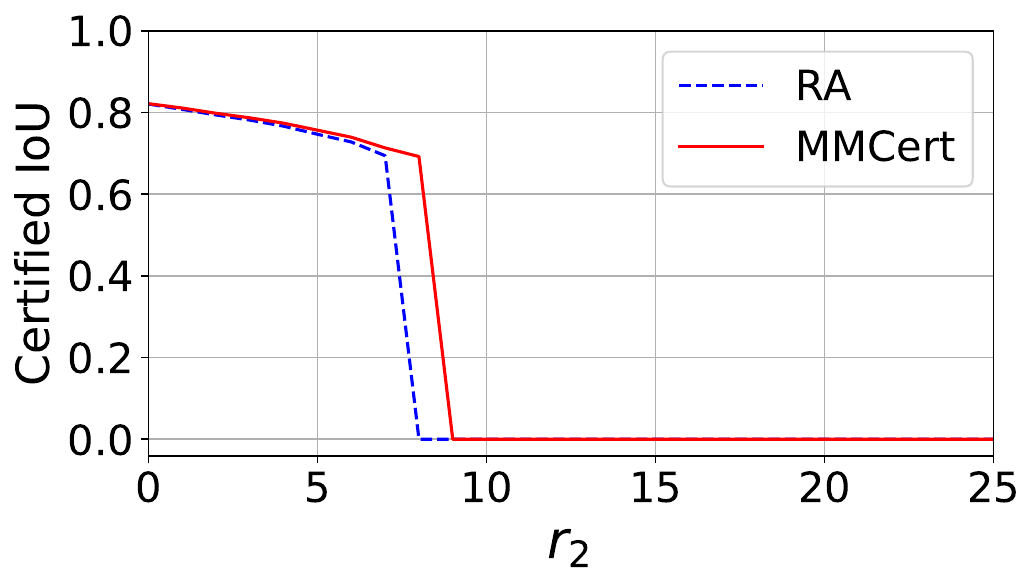}}\vspace{1mm}
\vspace{-4mm}
\caption{Compare our {\name} with randomized ablation on KITTI Road Dataset. Certified Pixel Accuracy (first row), Certified F-score (second row) and Certified IoU (third row) are considered.
}
\label{exp-r1geqr2-kitti}

\end{figure*}

\section{Experiment Results for the $r_1>r_2$ Case}\label{exp-r1-geq-r2}
Here, we compare our method with randomized ablation for the case $r_1>r_2$. For KITTI Road dataset, we set $k_1$ to 4,000 and $k_2$ to 6,000 for our {\name} and set $k$ to 10,000 for randomized ablation. For RAVEDESS, we let $k_1$ = 5 and $k_2$ = 1,000 for our {\name} and let $k$ = 3,000 for randomized ablation. The results of these experiments are illustrated in Figures \ref{exp-r1geqr2-ravdess} and \ref{exp-r1geqr2-kitti}, corresponding to RAVNESS and KITTI Road datasets, respectively. Our findings reveal that our method consistently surpasses randomized ablation across all $r_1$-$r_2$ ratios for both datasets. This can be attributed to the fact that randomized ablation is essentially a special case of our {\name}. Consequently, we can identify a combination of $k_1$ and $k_2$ that yields equal or better results than randomized ablation. Furthermore, our {\name} is more stable than randomized ablation during both training and testing phases because our method's sub-sampled input space is smaller than that of randomized ablation.

\begin{figure*}
\centering

\subfloat[$r_1 = r_2$]{\includegraphics[width=0.24\textwidth]{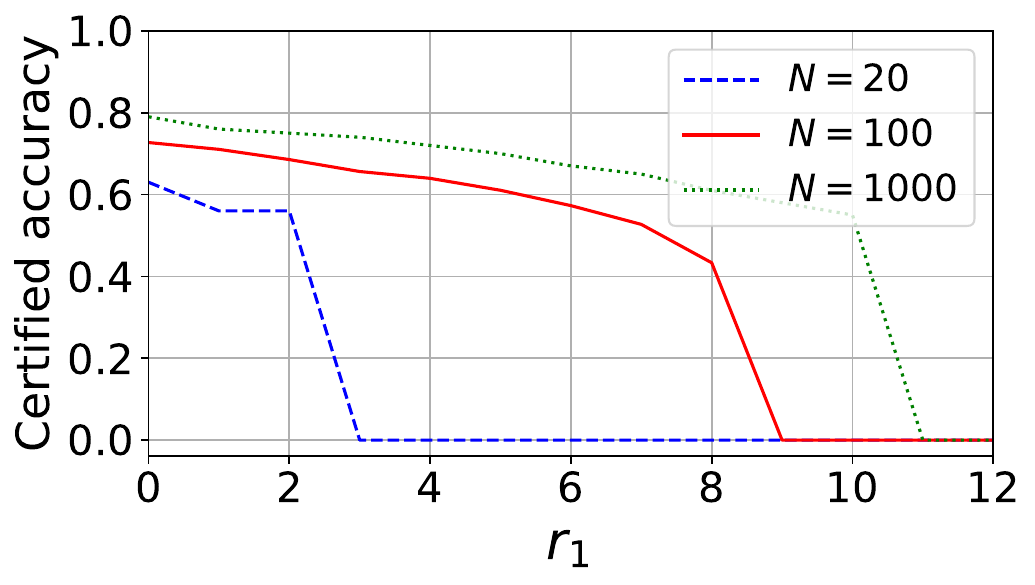}}\vspace{1mm}
\subfloat[$r_1 = 2r_2$]{\includegraphics[width=0.24\textwidth]{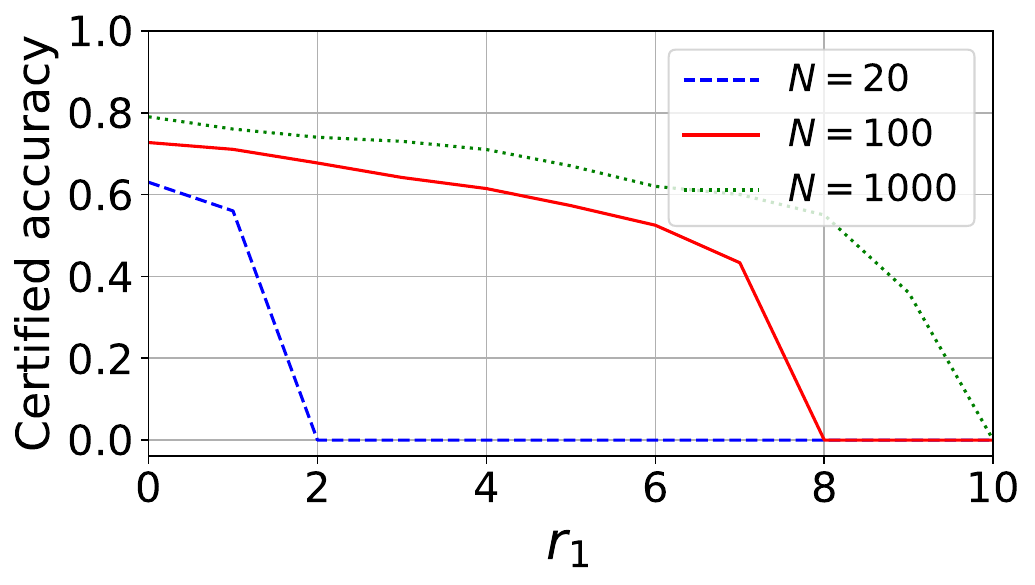}}\vspace{1mm}
\subfloat[$r_1 = 3r_2$]{\includegraphics[width=0.24\textwidth]{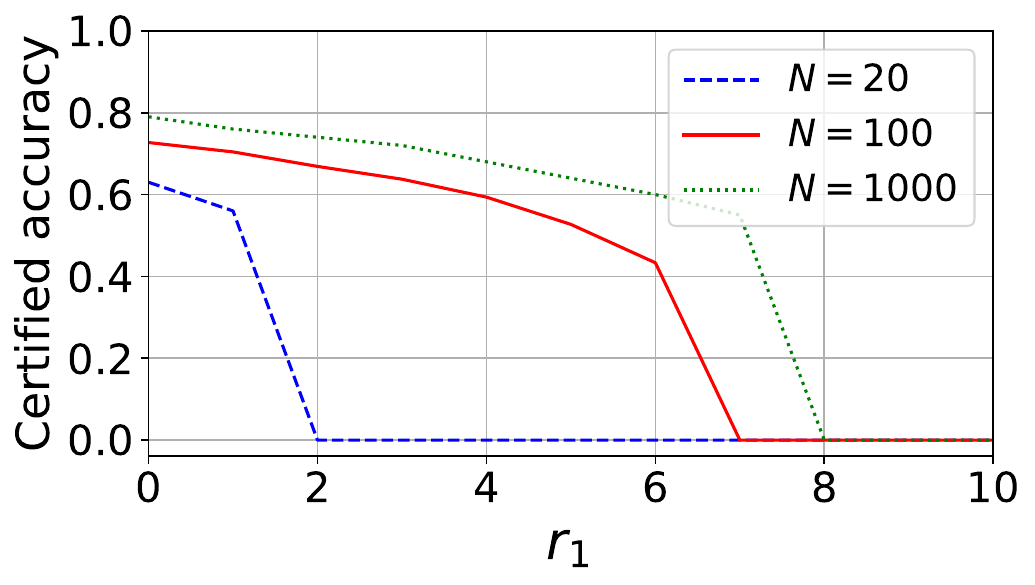}}\vspace{1mm}
\subfloat[$r_1 = 4r_2$]{\includegraphics[width=0.24\textwidth]{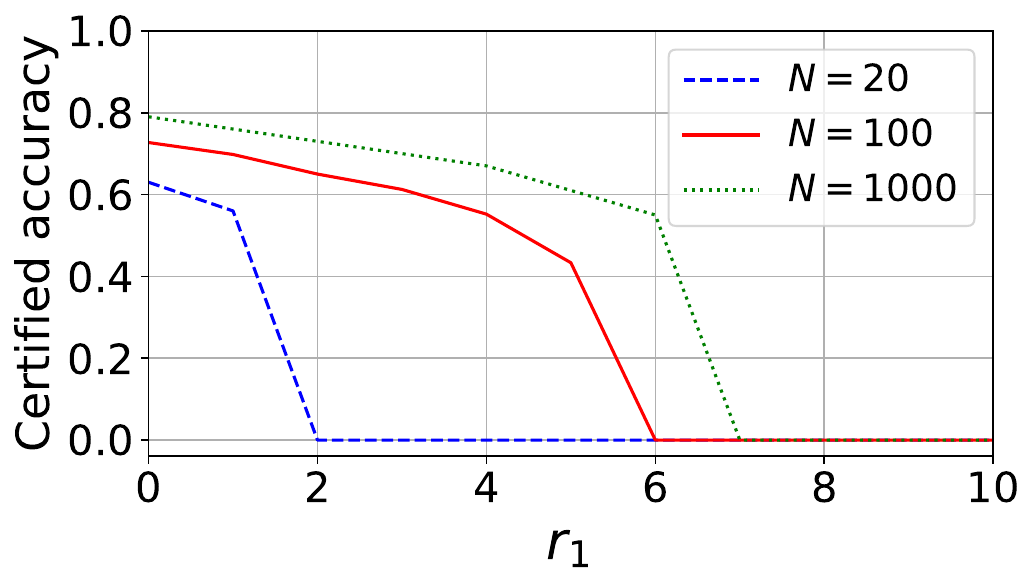}}\vspace{1mm}

\vspace{-4mm}
\caption{Impact of $N$ on RAVDESS dataset.
}
\label{exp-ablation-N}

\end{figure*}
\begin{figure*}
\centering

\subfloat[$r_1 = r_2$]{\includegraphics[width=0.24\textwidth]{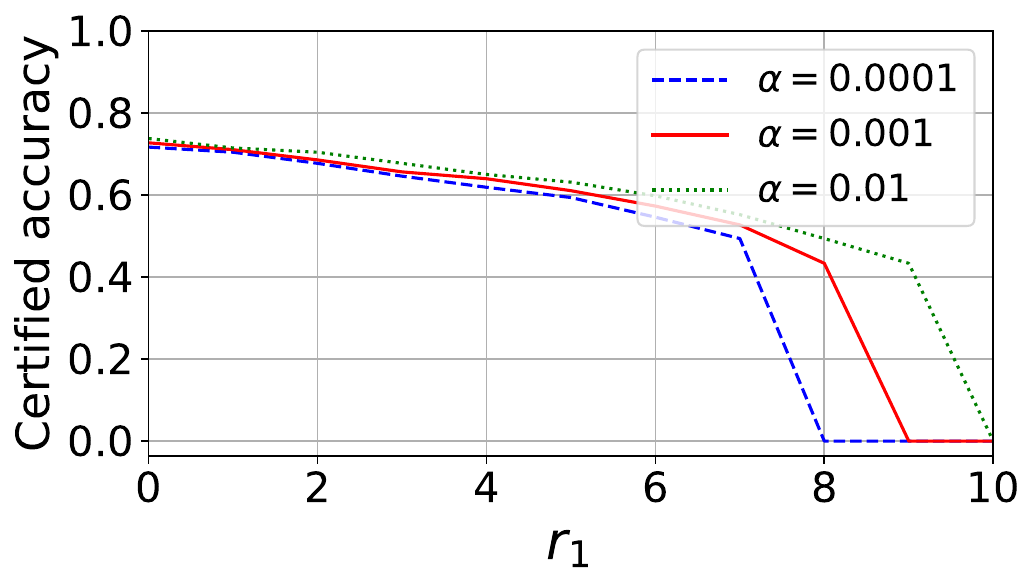}}\vspace{1mm}
\subfloat[$r_1 = 2r_2$]{\includegraphics[width=0.24\textwidth]{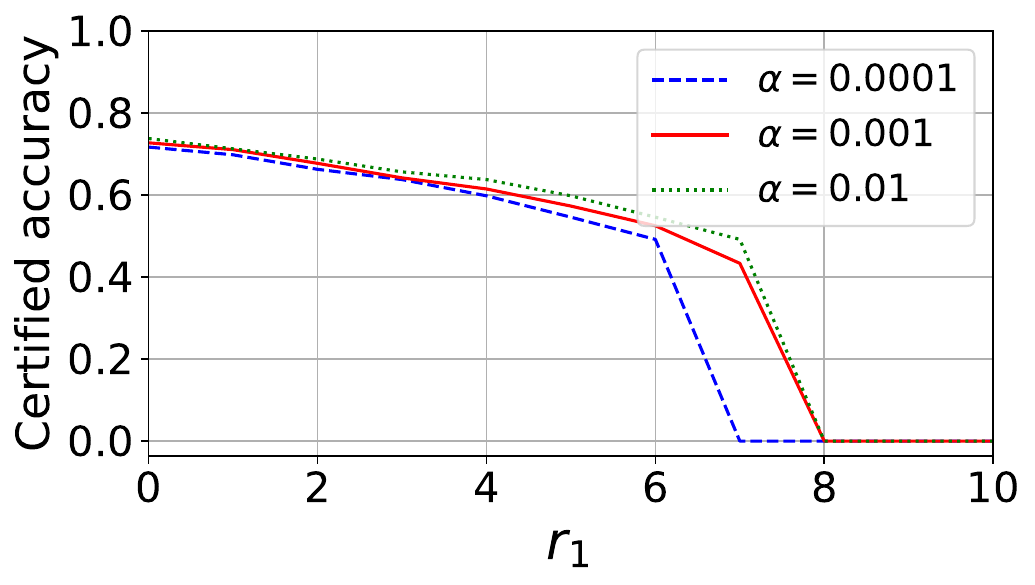}}\vspace{1mm}
\subfloat[$r_1 = 3r_2$]{\includegraphics[width=0.24\textwidth]{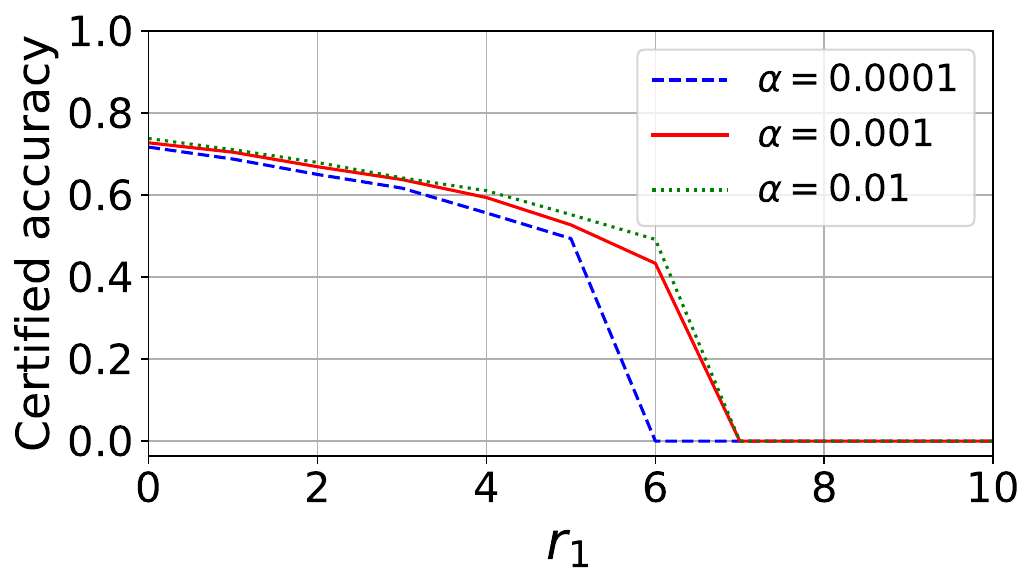}}\vspace{1mm}
\subfloat[$r_1 = 4r_2$]{\includegraphics[width=0.24\textwidth]{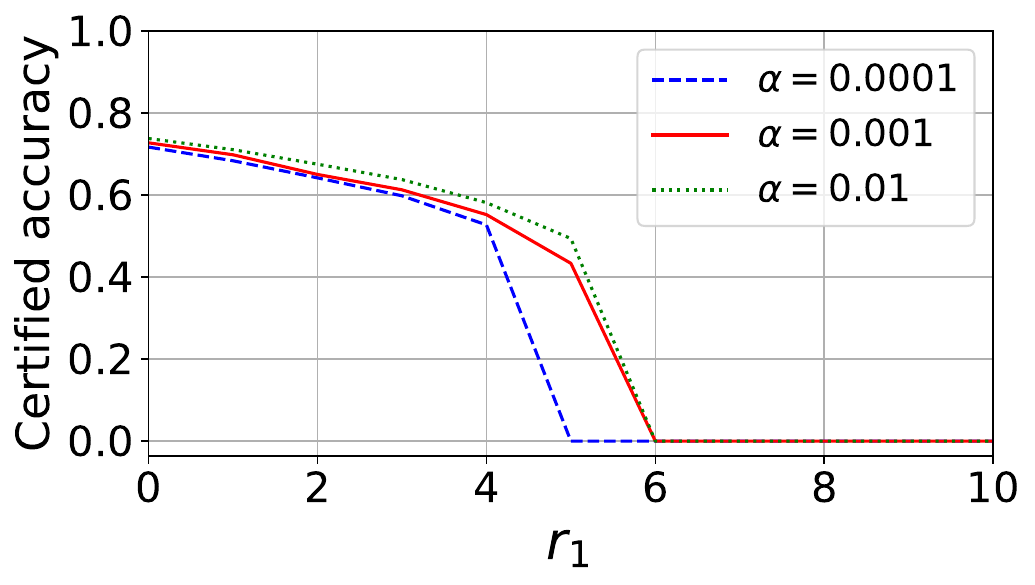}}\vspace{1mm}

\vspace{-4mm}
\caption{Impact of $\alpha$ on RAVDESS dataset.
}
\label{exp-ablation-alpha}

\end{figure*}
\section{Impact of $N$ and $\alpha$}
We study the impact of $N$ and $\alpha$ on RAVDESS dataset. Figure~\ref{exp-ablation-N} in Appendix shows the impact of $N$.  We discover that the certified accuracy improves with an increase in $N$. This enhancement occurs because a larger $N$ yields tighter lower or upper bounds for the label probability, given a constant confidence level $\alpha$. However, the computational cost also grows linearly with respect to $N$, reflecting a trade off between computational cost and certification performance. Figure~\ref{exp-ablation-alpha} in Appendix shows the impact of $\alpha$. We observe that {\name} achieves
better performance as $\alpha$ increases. This shows the trade off between the confidence of the certification and the certification performance.

\begin{figure*}
\centering
{\includegraphics[width=1\textwidth]{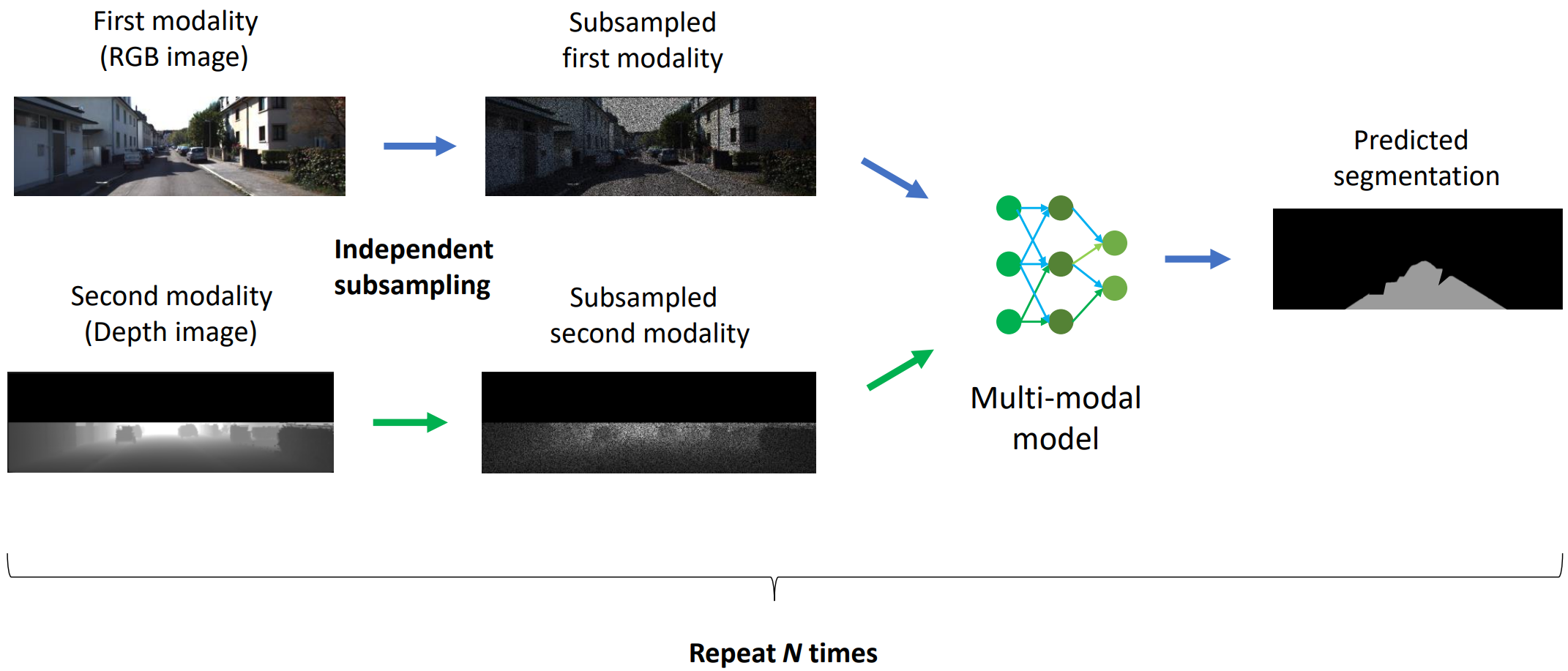}}\vspace{1mm}
\caption{Illustration of independent sub-sampling on KITTI Road dataset. Our method repeatedly generate predictions for subsampled multi-modal inputs. These predictions are then aggregated to get the final prediction.}
\label{fig-subsampling}
\end{figure*}
\end{document}